\def\thm@space@setup{%
  \thm@preskip=2pt
  \thm@postskip=2pt 
}
\newcommand{\highlight}[1]{\textcolor{brown}{\textbf{#1}}}
\theoremstyle{plain}
\theoremstyle{definition}
\theoremstyle{remark}
\def\eqref#1{equation~\ref{#1}}
\def\1{\bm{1}}
\DeclareMathAlphabet{\mathsfit}{\encodingdefault}{\sfdefault}{m}{sl}
\SetMathAlphabet{\mathsfit}{bold}{\encodingdefault}{\sfdefault}{bx}{n}
\def\gA{{\mathcal{A}}}
\def\gD{{\mathcal{D}}}
\def\gS{{\mathcal{S}}}
\newcommand{\E}{\mathbb{E}}
\DeclareMathOperator*{\argmax}{arg\,max}
\DeclareMathOperator*{\argmin}{arg\,min}
\newcommand{\rlzero}{\texttt{RLZero}\xspace}
\def\VM{V\!M}
\newcolumntype{Y}{>{\raggedright\arraybackslash}X}
\definecolor{codegreen}{rgb}{0,0.6,0}
\definecolor{codegray}{rgb}{0.5,0.5,0.5}
\definecolor{codepurple}{rgb}{0.58,0,0.82}
\definecolor{backcolour}{rgb}{0.95,0.95,0.92}
\definecolor{mydarkblue}{rgb}{0,0.08,0.45}
\definecolor{myredorange}{rgb}{0.8,0.33,0}
\lstdefinestyle{mystyle}{
    backgroundcolor=\color{backcolour},   
    commentstyle=\color{codegreen},
    keywordstyle=\color{magenta},
    numberstyle=\tiny\color{codegray},
    stringstyle=\color{codepurple},
    basicstyle=\ttfamily\footnotesize,
    breakatwhitespace=true,,         
    breaklines=true,                 
    captionpos=b,                    
    keepspaces=true,                 
    numbers=left,                    
    numbersep=5pt,                  
    showspaces=false,                
    showstringspaces=false,
    showtabs=false,                  
    tabsize=2,
    postbreak=\mbox{\(\hookrightarrow\)\space}, 
  columns=fullflexible              
}
\definecolor{myredorange}{rgb}{0.8,0.33,0}
\definecolor{burntorange}{rgb}{0.749,0.341,0}
\title{RLZero: Direct Policy Inference from Language Without In-Domain Supervision}
\author{Harshit Sikchi\thanks{~Equal contribution,\textsuperscript{\textdagger} Equal Advising. Correspondence to hsikchi@utexas.edu}~~$^{,1}$, Siddhant Agarwal$^{*,1}$, Pranaya Jajoo$^{*,2}$, Samyak Parajuli$^{*,1}$, Caleb Chuck$^{*,1}$,\\ \textbf{Max Rudolph}$^{*,1}$, \textbf{Peter Stone}\textsuperscript{\textdagger,1,3}, \textbf{Amy Zhang}\textsuperscript{\textdagger,1}, \textbf{Scott Niekum}\textsuperscript{\textdagger,4}\\
$^1$ The University of Texas at Austin, $^2$ University of Alberta\\
$^3$ Sony AI, $^4$ UMass Amherst
}
\begin{document}

\maketitle

\begin{abstract}
The reward hypothesis states that all goals and purposes can be understood as the maximization of a received scalar reward signal.  However,  in practice, defining such a reward signal is notoriously difficult, as humans are often unable to predict the optimal behavior corresponding to a reward function.  Natural language offers an intuitive alternative for instructing reinforcement learning (RL) agents, yet previous language-conditioned approaches either require costly supervision or test-time training given a language instruction. In this work, we present a new approach that uses a pretrained RL agent trained using only unlabeled, offline interactions—without task-specific supervision or labeled trajectories—to get zero-shot test-time policy inference from arbitrary natural language instructions. We introduce a framework comprising three steps: \textit{imagine}, \textit{project}, and \textit{imitate}. First, the agent imagines a sequence of observations corresponding to the provided language description using video generative models. Next, these imagined observations are projected into the target environment domain. Finally, an agent pretrained in the target environment with unsupervised RL instantly imitates the projected observation sequence through a closed-form solution. To the best of our knowledge, our method, \rlzero, is the first approach to show direct language-to-behavior generation abilities on a variety of tasks and environments without any in-domain supervision. We further show that components of \rlzero can be used to generate policies zero-shot from cross-embodied videos, such as those available on YouTube, even for complex embodiments like humanoids.
\end{abstract}
 

\section{Introduction}

 Underlying the many successes of RL lies the engineering challenge of reward design. It often takes a skilled expert to engineer a reward function which still might not be fully aligned to the true task specification \citep{boothperils2023}.
Not only does this challenge restrict the scaling of RL agents, but it also makes those agents inaccessible to any user inexperienced in reward design. Even for experts, confidently specifying reward functions is generally infeasible because learning agents have often been found to hack them~\citep{krakovna2018specification,amodei2016concrete,dulac2021challenges, intelligence2025pi_}; i.e. the learned policies for the reward function produce behaviors that do not align with what the human intended. Language is an expressive communication channel for human intent and allows bypassing traditional reward design, but learning a mapping from language to behaviors has historically required collecting and annotating behaviors that correspond to language~\citep{goyal2021zero,jang2022bc,o2023open} in the domain where the agent is to be deployed. Scaling this strategy is impractical as it requires labeling the agent's large space of behaviors with corresponding language descriptions, and repeating the process in every new domain. 


\begin{figure*}
    \centering
    \includegraphics[width=0.9\linewidth]{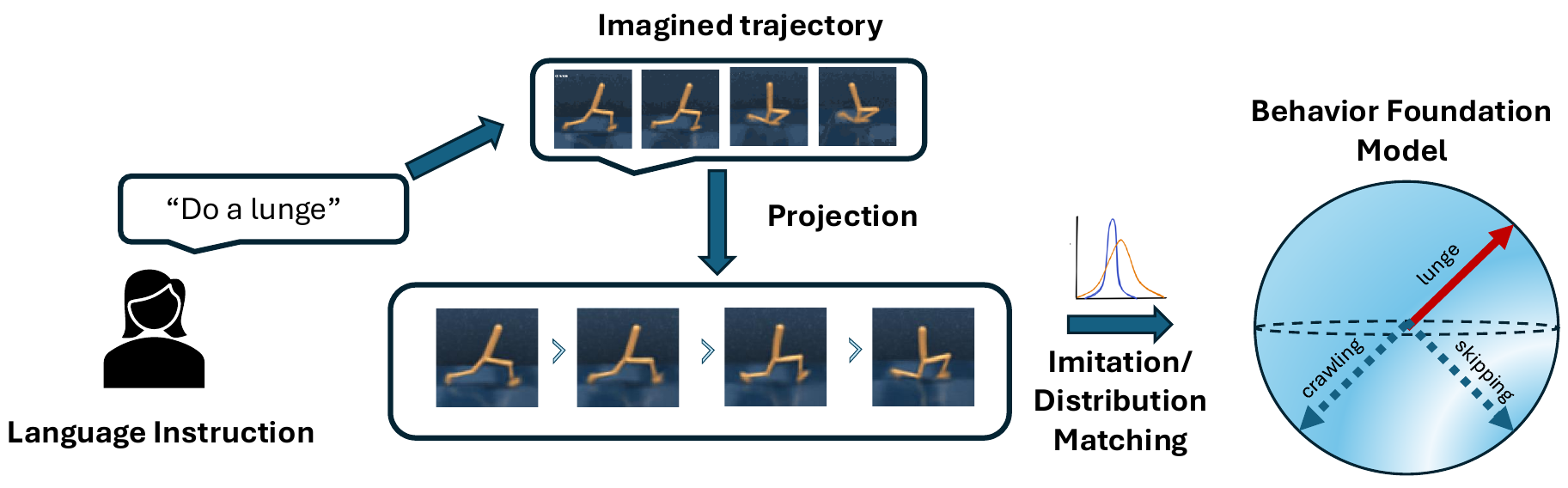}
    \caption{\rlzero framework of \textbf{imagine}, \textbf{project}, and \textbf{imitate}: A video trajectory is imagined using the text prompt and each frame is projected to agent's observation space. A closed form solution to imitation learning for BFMs trained with unsupervised RL is used to obtain a policy that mimics the projected video behavior.}
    \label{fig:figure_1}
\end{figure*}

 This paper is motivated by the vision that a true generalist agent should be able to solve new tasks that are presented to it in a way that is natural for people to specify. Language offers an intuitive channel for task specification, but prior language-based policy learning methods have required significant agent training for specific language commands. In contrast, the objective of this work is to enable generalist agents to translate language commands into behaviors, with no post-command training. The solution we identify can be divided into two parts: (1) being able to communicate the task specification through language commands, and, (2) quickly producing policies for the inferred task. Large-scale multimodal foundation models~\citep{wang2024internvideo2scalingfoundationmodels} provide us with the first part of the solution. Trained on large amounts of internet data, they can generate video segments that communicate what performing a task entails. Unfortunately, these video segments can be out-of-distribution of the agent's domain with different dynamics and embodiments. Thus, this work addresses this challenge by projecting individual frames of generated video to agent observations using semantic similarity~\citep{radford2021learning,zhai2023sigmoid}, thus grounding them in the agent's observation space.

This presents us with the next part of the problem: How do we quickly infer a policy that solves the communicated task i.e. mimics the grounded imagined trajectories? We utilize the recent success of successor-features-based unsupervised RL methods, sometimes called Behavior Foundation Models (BFMs)~\citep{fballreward,park2024foundation,agarwal2024protosuccessormeasurerepresenting,sikchi2025fast} for direct policy inference. These models are pretrained in the agent's domain using task-agnostic prior interactions of \textit{arbitrary quality} from the environment to encode a wide variety of behaviors. During inference, near optimal policies corresponding to \textit{any reward function} can be obtained in a zero-shot manner, i.e. in closed form without any gradient updates.

 Putting these two parts together, we frame the problem of language-to-policy as a translation from one space, of language, to another space, of behaviors, as specified by successor features. To address this problem, we leverage the capabilities both of video language models to act as a pretrained medium for translating language to states, and of unsupervised RL methods to provide a zero-shot solution for distribution matching. The resulting approach is inspired by the imagination capabilities of humans to picture in their mind possibilities in the real world~\citep{sarbin2004role,sarbin1970toward,pylyshyn2002mental} and then ro rely on past experiences, memories, and abilities to inform their actions. Our framework \rlzero (illustrated in figure~\ref{fig:figure_1}) comprises three main steps: a) \textbf{Imagine:} Imagine trajectories given a language command; b) \textbf{Project}: The frames of imagined trajectories are projected to real observations of the agent;  c) \textbf{Imitate:} Leverage a pretrained BFM to directly output a policy that matches the state visitation distribution of the imagined trajectories. Our experiments indicate that RLZero is a promising approach to designing an interpretable link connecting humans to RL agents. We find that RLZero is an effective method on a variety of tasks where reward function design would require an expert reward engineer. We further find that RLZero exhibits successful  direct cross-embodiment transfer, the first approach to be able to do so, to the best of our knowledge. This paper's main contribution is the framework of imagine, project, and zero-shot imitate, which diverges from the prior approach of using VLMs as reward functions---which can be hacked---and instead focuses on zero-shot imitation with unsupervised RL, which admits a unique solution that matches the imagined behavior. \looseness=-1

\section{Related Work}
\textbf{Language and Vision Based Control:} 
There has been a rich history of using vision \citep{chanesane2023learningvideoconditionedpoliciesunseen, jain2024vid2robot, sontakke2024roboclip, luo2025learning} and language \citep{thomason2015learning,stepputtis2020language,goyal2021pixl2r, ma2023eureka} as task primitives in RL. 
While several of these infer a reward function from language instructions and/or VLMs \citep{rocamonde2023vision,baumli2023vision,sontakke2024roboclip, xie2024text2rewardrewardshapinglanguage, yu2023language, luo2024text} and require training an RL agent using this reward, others have used trajectory-instruction pairs to train the agents with the hope of generalizing to related instructions. A number of recent works have utilized large pretrained foundation models to generate video plans \citep{du2023learninguniversalpoliciestextguided, du2023video, luo2025solving} or plans containing textual actions \citep{huang2022language, ahn2022can, wang2024describeexplainplanselect}. Some similar methods generate these video plans using diffusion models or energy models \citep{luo2024text}. But these methods often require in-domain labeled trajectories to finetune the large models or train these diffusion models \citep{luo2024text, luo2025solving}. These often rely on APIs \citep{codeaspolicies2022}, inverse dynamics models \citep{du2023learninguniversalpoliciestextguided, luo2025solving}, or goal-conditioned policies trained using text-expert trajectory pairs \citep{du2023video} to convert these plans to actions in the agent's space. This means that these methods obtain an open-loop sequence of actions that is a translation from the video or textual plan to the agent's action space. \rlzero differs from all of these as: (1) It does not have any test time training or planning and directly obtains an executable closed loop policy, (2) it does not require any in-domain task-dependent supervision like expert labeled trajectories to either train or finetune the video trajectory generation or for policy training. Recent work~\citep{mazzaglia2024multimodal} proposed an approach to generate the video plans without requiring in-domain supervision. But this work still requires training the RL agent for each given task prompt which can be tedious and time consuming.


\textbf{Zero-shot RL}: 
Zero-shot RL~\citep{zeroshot} promises the ability to quickly produce optimal policies for any given task defined by a reward function. A wide variety of methods have been developed to achieve zero-shot RL, which are in some ways generalizations of multi-task RL~\citep{Caruana1997MultitaskL}. Most of these works assume a class of tasks where they can produce policies zero-shot. These tasks can be goal-conditioned~\citep{gcrl1993, durugkar2021adversarial, agarwal2023fpolicy, sikchi2023score, ma2022vip}, a linear span of certain state-features~\citep{dayan1993improving, barreto2018successor} or some combination of some skills~\citep{eysenbach2018diversity, eysenbach2022the, park2024metra}.  Recent works~\citep{wu2018laplacian, fballreward, zeroshot, park2024foundation, agarwal2024protosuccessormeasurerepresenting} employ a successor measure-based representation learning objective to be able to provide near-optimal policies for arbitrary reward function subject to model capacity constraints. Our work leverages these methods and finds the \emph{best} reward supported by the representations that will produce the language-conditioned imagined trajectory. \looseness=-1


\section{Preliminaries}
\label{sec:prelims}

\rlzero uses generative models to imagine trajectories from language prompts and directly produces a policy that results in observations imitating the imagined trajectory. In this section, we introduce the notion of trajectory generation, imitation learning, and unsupervised RL. 

\textbf{Multimodal Video-Foundation Models (ViFMs) and In-Domain Video Generation:}
Multimodal ViFMs~\citep{wang2022internvideo, wang2024internvideo2scalingfoundationmodels, tong2022videomaemaskedautoencodersdataefficient} enable the understanding of video data in a shared representation space of other modalities such as text or audio. These shared representations can be used  to condition video generation on different input modalities~\citep{kondratyuk2023videopoet, blattmann2023stablevideodiffusionscaling}. Notably, these models can utilize text prompts to guide content, style, and motion, or employ an image as the initial frame for a subsequent video sequence. For this work, we use video generation models $\VM$ that generate a sequence of video frames $\{i_1,i_2,...i_n\}$ given a task specified in natural language $l$ by first converting the language prompt to a common embedding space across modalities; formally, $\VM:l\rightarrow \{i_1,i_2,...i_n\}$.

\textbf{Imitation Learning through Distribution Matching:}
We consider a learning agent in a Markov Decision Process (MDP)~\citep{puterman2014markov,sutton2018reinforcement} which is defined as a tuple: $\mathcal{M}=(\gS,\gA,p,r,\gamma,d_0)$ where $\gS$ and $\gA$ denote the state and action spaces respectively, $p$ denotes the transition function with $p(s'|s,a)$ indicating the probability of transitioning from $s$ to $s'$ taking action $a$; $r$ denotes the reward function, $\gamma \in (0,1)$ specifies the discount factor and $d_0$ denotes the initial state distribution. 
The reinforcement learning objective is to obtain a policy $\pi: \gS \rightarrow \Delta(\gA)$ that maximizes expected return: $\E_{\pi}[{\sum_{t=0}^\infty \gamma^t r(s_t)}]$, where we use $\mathbb{E}_{\pi}$ to denote the expectation under the distribution induced by $a_t\sim\pi(\cdot|s_t), s_{t+1}\sim p(\cdot|s_t,a_t)$ and $\Delta(\gA)$ denotes a probability simplex supported over $\gA$. 

An imitation learning agent does not have access to the reward function, $R$, but has access to an "expert" trajectory (or a set of "expert" trajectories) from a policy that maximizes the reward function. 
Distribution matching objectives~\citep{ghasemipour2020divergence,ni2021f} for imitation learning have been commonly used in some recent work~\citep{garg2021iq,sikchi2024dual}. The objective in these methods is $\min_\pi \gD(\rho^\pi, \rho^E)$, where $\rho^\pi$ is the visitation distribution of the policy $\pi$ (defined by the probability of being in state $s$ starting from the initial state distribution $s_0$ and following the policy $\pi$), $\rho^E$ is the visitation distribution exhibited by the "expert" trajectory and $\gD$ is a function to compare the closeness of the distributions. $f$-Divergences are commonly used as a measure of distance between distributions.

\textbf{Successor Measure based Unsupervised RL:} Successor Measure~\citep{blier2021learning} based learning has been recently studied~\citep{fballreward, agarwal2024protosuccessormeasurerepresenting} as an unsupervised RL objective for its ability to describe long-term behavior of the policy in the environment. Mathematically, successor measures define the measure over future states visited as $M^\pi$, 
\begin{equation}
    M^\pi(s,a,X)=  \mathbb{E}_\pi\left[\sum_{t\ge 0} \gamma^t p^\pi(s_{t+1} \in X|s,a) \right] ~~\forall  X\subset \mathcal{S}.
\end{equation}
Representing the successor measure for any policy $\pi$ as  $\psi^\pi(s, a)^T\varphi(s^+)$ and trained on a dataset $d^O$ of offline interactions of the form $\{s,a,s'\}$, these methods facilitate  extraction of a state-representation $\varphi(s)$ that is suitable for RL. Then, policies $\pi_z$(where the policies are represented using latents $z$) are learned offline that are near-optimal for a reward function defined in the span of learned state-features $r_z(s)=\varphi(s)\cdot z$. At test time, the policy for any given reward function ($r(s)$) can then be obtained with such a BFM (zero-shot, with no additional experiential data or gradient updates) using a closed-form solution to the following linear regression: 
\begin{equation}
\label{eq:closed_form_url}
    z_R = \min_z \mathbb{E}_{d^O}[(r(s)-\varphi(s)\cdot z)]^2 \implies z_R = \mathbb{E}_{d^O}[\varphi(s)\cdot\varphi(s)^T]^{-1}  \mathbb{E}_{d^O}[\varphi(s)\cdot r(s)]
\end{equation}
For this work, we use ($\varphi(s),\pi_z$) to represent a BFM trained with successor-measure based unsupervised RL.

\section{RLZero: Zero-Shot Prompt to Policy}\label{sec:rlzero}
 \rlzero uses components trained without any explicit in-domain supervision to map language to behaviors. For each domain, we consider a dataset $d^O$ of reward-free interactions \{$s,a,s'$\}  and a BFM ($\varphi(s),\pi_z$) pre-trained on $d^O$.  In the following sections, we describe the steps involved in detail. First, we present how an imagined trajectory is generated from a prompt. Then, we discuss how this imagined trajectory is projected to real observations of an agent. Finally, we describe the zero-shot procedure for inferring a policy that matches the behavior in the imagined trajectory. The latter two steps can also be used to directly infer policy when instead of language prompt, a cross-embodied video demonstration is provided.
 
 
\subsection{Imagine: Unsupervised Generative Text-Conditioned Video Modeling}
\label{sec:imagine}
Grounding language to tasks has historically~\citep{goyal2021zero,jang2022bc,o2023open} required costly annotation labels that map language to task examples specified through image or state trajectories. Large video-language foundation models (ViFMs) help lift that requirement by training on vast amounts of internet videos, thus giving us a rich prior of grounding language commands to videos.  We leverage the unsupervised video modeling approach from GenRL~\citep{mazzaglia2024multimodal}. In the absence of any in-domain text labels, the approach uses an off-the-shelf video-language encoder with video encoder $f_v$ and language encoder $f_l$ trained on internet scale data to encode a sequence of observation frames ($o_{1:T}$) to a embedding space shared with its language description. A generative video model conditions on this latent embedding to generate a sequence of image observations ($i_{1:T})$. The video model ($\VM$) is trained to reconstruct the original observation frames ($o_{1:T}$) and at test time can simply be queried with a language instruction. Similar to GenRL, we use InternVideo2 as the video-language encoder and use a simplified GRU architecture for video generation. We refer the readers to GenRL for more details on training the video generation model and accounting for multimodality gap between video-text encodings.
Training the video generation model does not require text labels mapping language to tasks and is fully unsupervised. Thus during inference, given a language instruction $e^l$, we obtain a sequence of frames $\VM(f_l(e^l)) = (i_1,i_2...i_T)$ that represents an imagination of what the task looks like in the environment domain. Figure~\ref{fig:imagine} shows an example of what these imaginings look like using the trained video generation model. Given the generated video of text description, GenRL learns an action-conditional world model to train a new policy for each text-instruction with model-based RL. This makes policy inference for a variety of tasks costly, unsafe, and memory-inefficient. In contrast, with \rlzero we present a way to lift this limitation with unsupervised RL to directly infer a policy given generated video in the sections below.

\subsection{Project: Grounding to Agent's Observation Space}
\label{method:grounding imagination}

The imaginings produced by $\VM$ can be noisy, unrealizable, and not exactly representative of the domain. We propose to use semantic-similarity based retrieval to obtain nearest frames in the dataset of the agent's prior environmental interactions $d^O$ to project \textit{individual} frames of imagined trajectories to real observations as shown in Fig~\ref{fig:sem-match}. 
\begin{wrapfigure}{r}{0.5\linewidth}  
  \centering
  \includegraphics[width=\linewidth]{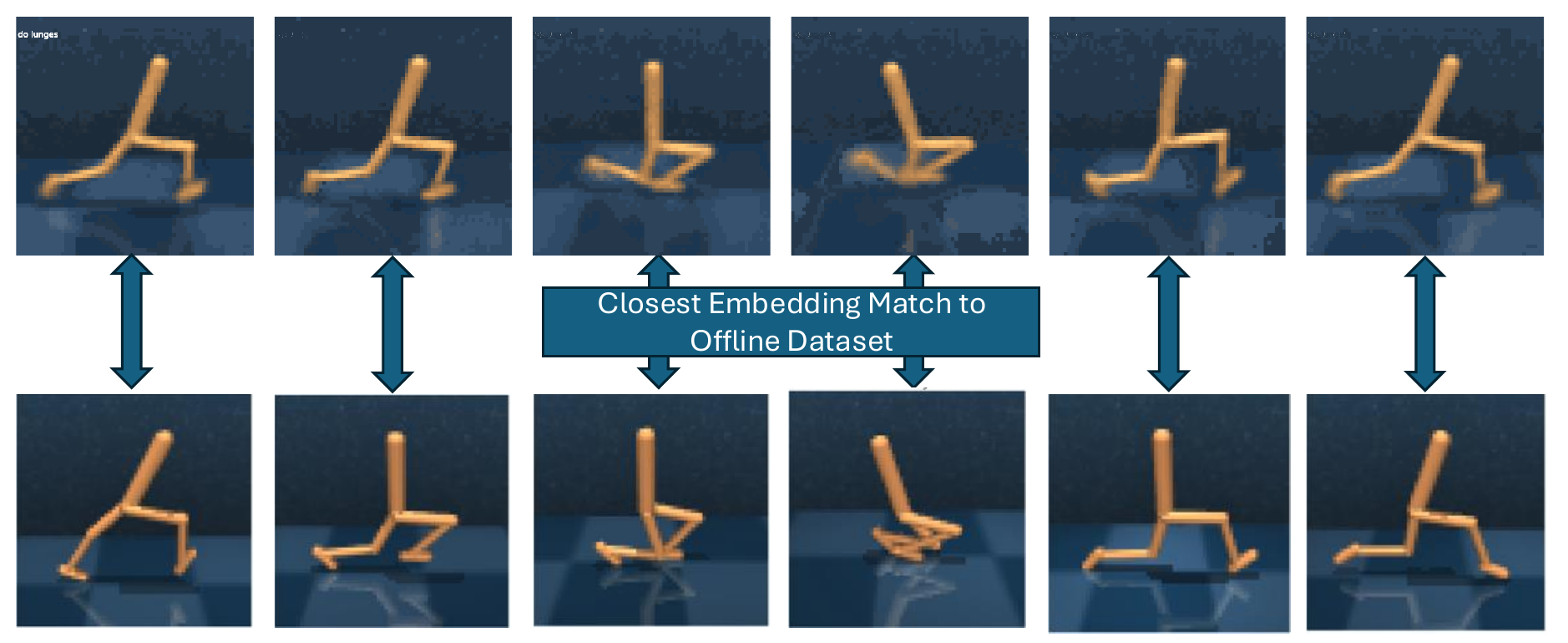}
  \caption{\textbf{Grounding Imagination in Real Observations}: We use nearest image retrieval defined by cosine similarity in the embedding space to output a real observation from the dataset that matches the imagined frame.}
  \label{fig:sem-match}
\end{wrapfigure}
Semantic matching also allows us the flexibility to replace imaginings with a video demonstration of a task by a different embodiment agent in a different domain (e.g. as we will demonstrate in Section~\ref{exp:cross-embodiment}).  In this work, we use an off-the-shelf vision-language embedding approach for retrieval, SigLIP~\citep{zhai2023sigmoid}, to map both the imagined frame and agent observation to the same latent embedding space, which has been pre-trained for similarity matching on an internet-scale dataset. As a single image observation is not informative of key state variables such as velocity or acceleration, we use the frame stacking technique common in RL~\citep{mnih2013playing} when finding the nearest observation from the dataset corresponding to any imagined frame. We use an encoding function $\mathcal{E}:\mathcal{I}\rightarrow \mathcal{Z}$ to individually map a sequence of images to shared image-text embedding space.:
\begin{equation}
    o_{t} = \argmax_{o_{t}} \frac{\mathcal{E}(o_{t-k:t})\cdot \mathcal{E}(i_{t-k:t})}{\|\mathcal{E}(o_{t-k:t})\|\|\mathcal{E}(i_{t-k:t})\|}~~~\forall t\in [T].
\label{eq:embed-align}
\end{equation}
Above we use $k$ previous frames (usually set to 3) to identify state variables in order to improve matching accuracy.
Using the offline interaction dataset, we find corresponding proprioceptive states in addition to the real observation that we will subsequently use as a target sequence to imitate. We rely on proprioceptive states for  the subsequent unsupervised RL phase, but in general BFMs may be trained with image observations if the state information is unavailable. While this step allows us to ground imagination to a sequence of real observations that the video expects the agent to follow, action inference still remains a challenge as the sequence might not be dynamically feasible or be available as a trajectory in the dataset to behavior clone.


\subsection{Imitate: Zero-Shot Distribution Matching with Unsupervised RL}
\label{method:zero_shot_imitation}
The resulting projected sequence of states might be dynamically infeasible and out of distribution from the trajectories in the offline dataset. Our key insight is to leverage unsupervised RL techniques to discover new skills from an offline dataset and use state-marginal matching to find a policy that induces the closest dynamically feasible sequence imitating the imagination. This section demonstrates that after pretraining a successor-measure based unsupervised RL agent with reward-free transitions, an approximate closed-form solution to state marginal matching exists without requiring any further gradient steps or interaction with the environment.

We take the distribution matching perspective of imitation learning~\citep{ho2016generative,ghasemipour2020divergence} and find a policy that matches the state marginal distributions of the grounded imagined trajectories (expert). Given a dataset of offline environmental interactions ($d^O$ with distribution $\rho$), we pretrain a successor-measures based unsupervised RL agent~\citep{agarwal2024protosuccessormeasurerepresenting, zeroshot} ($\varphi(s),\pi_z$) that provides us with a mapping from reward function $R$ to the corresponding $z_R$ that induces a near-optimal policy $\pi_{z_R}$ as shown in Eq.~\ref{eq:closed_form_url}. For the case of state-marginal matching, finding the optimal policy reduces to finding the optimal latent $z$ minimizing the following distribution matching objective:
\begin{equation}
    z_{imit}=\argmin_z \gD (\rho^{\pi_z}(s),\rho^E(s)),
    \label{eq:dm}
\end{equation}
where $\rho^E$ and $\rho^{\pi_z}$ are state visitation distribution of the "expert" imagined trajectories and of the policy $\pi^z$ respectively and $\gD$ can be chosen to be mean-squared error, $f$-divergence, Integral Probability Metrics (IPM), etc. 
In general, minimizing the distance via gradient descent can provide a solution $z_{imit}$ to distribution matching. For the special case of KL divergence, Theorem~\ref{lemma:closed_form_imitation} shows that $z_{imit}$ can be obtained using a learned distribution ratio between expert and offline interaction dataset $\rho^E/\rho$.
\begin{restatable}[]{theorem}{imitation}
\label{lemma:closed_form_imitation}
Define $J(\pi, r)$ to be the expected return of a policy $\pi$ under reward $r$. 
For an offline dataset $d^O$ with density $\rho$, a learned log distribution ratio: $\nu(s)=\log(\frac{\rho^E(s)}{\rho(s)})$, $D_{KL}(\rho^\pi, \rho^E) \le -J(\pi, r^{imit})  + D_{KL}(\rho^\pi(s,a), \rho(s,a))$ where $r^{imit}(s)=\nu(s)~\forall s$. The corresponding $z_{imit}$ minimizing the upper bound is given by $z_{imit}=\mathbb{E}_{\rho}[{r^{imit}(s)}\varphi(s)]=\E_{\rho^E}[\frac{\nu(s)}{e^{\nu(s)}}\varphi(s)]$ where $\varphi$ denotes state features learned by the BFM.
\end{restatable}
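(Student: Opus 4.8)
The plan is to split the argument into two independent pieces: the variational upper bound on $D_{KL}(\rho^\pi,\rho^E)$, and the closed-form minimizer $z_{imit}$. For the bound I would start from $D_{KL}(\rho^\pi,\rho^E)=\E_{s\sim\rho^\pi}[\log(\rho^\pi(s)/\rho^E(s))]$ and insert the offline density $\rho$ into the log-ratio, writing $\log(\rho^\pi(s)/\rho^E(s)) = \log(\rho^\pi(s)/\rho(s)) + \log(\rho(s)/\rho^E(s))$. This yields the exact decomposition $D_{KL}(\rho^\pi,\rho^E) = \E_{s\sim\rho^\pi}[\log(\rho^\pi(s)/\rho(s))] + \E_{s\sim\rho^\pi}[\log(\rho(s)/\rho^E(s))]$, which isolates a regularization term and a reward term.

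The next step is to identify each piece. Since $\nu(s)=\log(\rho^E(s)/\rho(s))=r^{imit}(s)$, the last expectation equals $-\E_{s\sim\rho^\pi}[r^{imit}(s)]$, which is precisely $-J(\pi,r^{imit})$ once the discounted state-visitation $\rho^\pi$ is identified with the occupancy measure defining $J$ (up to the usual $(1-\gamma)$ normalization constant, which is immaterial for the later $\argmin_z$). The first expectation is by definition $D_{KL}(\rho^\pi(s),\rho(s))$, the KL between state marginals. To reach the state-action KL stated in the theorem I would invoke the chain rule $D_{KL}(\rho^\pi(s,a),\rho(s,a)) = D_{KL}(\rho^\pi(s),\rho(s)) + \E_{s\sim\rho^\pi}[D_{KL}(\rho^\pi(a|s),\rho(a|s))]$ and drop the nonnegative conditional term, giving $D_{KL}(\rho^\pi(s),\rho(s)) \le D_{KL}(\rho^\pi(s,a),\rho(s,a))$. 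Combining the three observations yields the claimed inequality, the $\le$ coming entirely from this marginalization step.

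For the minimizer I observe that the upper bound is exactly a KL-regularized reward-maximization objective with reward $r^{imit}=\nu$. Here I would appeal directly to the BFM machinery of Eq.~\ref{eq:closed_form_url}: the near-optimal latent for a reward $r$ is $\E_{d^O}[\varphi\varphi^\top]^{-1}\E_{d^O}[\varphi\, r]$, and under the standard whitening normalization $\E_\rho[\varphi(s)\varphi(s)^\top]=I$ used when training the representation this collapses to $z_{imit}=\E_\rho[r^{imit}(s)\varphi(s)]$. The final equality follows from one change of measure from $\rho$ to $\rho^E$: multiplying and dividing by $\rho^E$ gives $\E_\rho[\nu(s)\varphi(s)] = \E_{\rho^E}[(\rho(s)/\rho^E(s))\,\nu(s)\varphi(s)]$, and substituting $\rho(s)/\rho^E(s)=e^{-\nu(s)}=1/e^{\nu(s)}$ produces $\E_{\rho^E}[(\nu(s)/e^{\nu(s)})\varphi(s)]$.

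The main obstacle I anticipate is the step tying ``minimizing the regularized upper bound'' to the plain reward-maximization closed form of Eq.~\ref{eq:closed_form_url}: strictly, the bound carries the extra $D_{KL}(\rho^\pi(s,a),\rho(s,a))$ regularizer, whereas the BFM solution is stated for unconstrained return maximization. I would argue that this regularizer is exactly what the offline, reward-free pretraining of the BFM on $\rho$ already enforces, since $\pi_z$ can only realize occupancies supported near $\rho$, so selecting $z$ to maximize $J(\pi_z, r^{imit})$ via the linear-regression solution is precisely minimizing the bound within the representable family. I would also be explicit about the whitening assumption $\E_\rho[\varphi\varphi^\top]=I$ and the $(1-\gamma)$ normalization, since these are what make the stated formula exact rather than merely proportional.
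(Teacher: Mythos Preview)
Your proposal is correct and follows essentially the same route as the paper: the identical insert-$\rho$ decomposition of $D_{KL}(\rho^\pi\|\rho^E)$, the same chain-rule bound $D_{KL}(\rho^\pi(s)\|\rho(s))\le D_{KL}(\rho^\pi(s,a)\|\rho(s,a))$, and the same argument that the offline-trained BFM already enforces the regularizer so one may invoke Eq.~\ref{eq:closed_form_url} directly. If anything you are more explicit than the paper about the whitening assumption $\E_\rho[\varphi\varphi^\top]=I$ and the change-of-measure step yielding $\E_{\rho^E}[\nu\,e^{-\nu}\varphi]$, neither of which the paper spells out.
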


Thus, with the reward function $r^{imit}(s)=\nu(s)$, we can use the solution of $z_{imit} = \E_{\rho}{[\varphi(s)r^{imit}(s)]}$ to retrieve the policy that mimics the grounded imagined behavior. However this reward function still requires learning a discriminator to obtain the distribution ratio for each task, which can lead to instabilities. A heuristic yet performant alternative is to use a shaped reward function $r(s)= e^{\nu(s)}$, similar to~\citet{pirotta2023fast}, which allows closed-form inference ($z_{imit}= \E_{\rho}{[e^{\nu(s)}\varphi(s)]}=\E_{\rho}{[\frac{\rho^E(s)}{\rho(s)}\varphi(s)]}= \E_{\rho^E}{[\varphi(s)]}$) without learning a discriminator, any gradient steps or interaction with the environment. We compare both approaches in Appendix~\ref{ap:discriminator}. The performance for both these methods are almost identical and we defer to the latter one (gradient-free inference approach) in all our experiments. 
The complete algorithm for \rlzero can be found in Algorithm~\ref{alg:ALG1}.

\vspace{-4pt}
\section{Experiments}
\label{sec:experiments}

Our experiments seek to understand the quality of behaviors that \rlzero  is able to produce in two settings (a) Language to Behavior, and (b) Cross-embodied Video to Behavior Generation. We note that none of the experiments below assume in-domain supervision such as annotations of trajectories in the environment with their task label, or expert demonstrations corresponding to the specified tasks either for training the video generation model or training the behavioral foundation model. \looseness=-1

\begin{table*}[t]
\resizebox{\textwidth}{!}{
{\fontsize{13}{13}\selectfont
\begin{NiceTabular}{cc|cc|cc|c|c}
\toprule
&& \multicolumn{2}{c}{Image-language reward (VLM-RM)} & \multicolumn{2}{c}{Video-language reward} &GenRL&RLZero  \\ 
\midrule
&& IQL          & TD3  (Base Model)    & TD3    & IQL      & model-based   & \\
\hline\\
\multirow{7}{*}{\rotatebox{90}{\large\textbf{Walker}}}&Lying Down  &  2/5 (307.04$\pm$52.60) & \textemdash ~(116.97$\pm$69.70)& 2/5 (120.48$\pm$50.47) &  \highlight{5/5} (419.05$\pm$281.56)&4/5 (199.66$\pm$30.64) &\highlight{5/5} (\textbf{524.34$\pm$31.78})\\ 
&Walk like a human  & 1/5 (95.05$\pm$78.08)  & \textemdash ~(30.34$\pm$34.58)&  3/5 (49.49$\pm$45.73)  & 4/5 (146.86$\pm$54.40)& \highlight{5/5} (632.09$\pm$36.31)&\highlight{5/5} (\textbf{704.68$\pm$47.16})\\ 
&Run like a human  & \highlight{5/5} (50.93$\pm$5.05)  & \textemdash ~(61.76$\pm$52.49)& 1/5 (100.24$\pm$89.76) & 2/5 (286.14$\pm$63.91) & \highlight{5/5} (316.21$\pm$82.61) & \highlight{5/5} (\textbf{475.49$\pm$52.37})\\ 
&Do lunges  & 4/5 (130.98$\pm$109.87) & \textemdash ~(67.24$\pm$24.31)& 2/5 (48.71$\pm$12.06) & 3/5 (217.78$\pm$147.57)  &\highlight{5/5} (\textbf{484.68$\pm$50.60})& \highlight{5/5} (377.53$\pm$30.14)\\ 
&Cartwheel & {4/5} \textbf{(320.03$\pm$228.81)}  & \textemdash ~(167.04$\pm$48.12)&  3/5 (151.16$\pm$67.95)& {4/5} (175.45$\pm$50.87)  & \highlight{5/5} \textbf{(252.68$\pm$33.76)}&  {4/5} \textbf{(254.89$\pm$12.74)}\\ 
&Strut like a horse & \highlight{5/5} (110.08$\pm$50.24)  & \textemdash ~(164.61$\pm$110.45) &1/5 (103.45$\pm$69.48)  &  3/5 (81.14$\pm$25.01) & \highlight{5/5} \textbf{(361.45$\pm$103.07)} & \highlight{5/5} 
 \textbf{(445.66$\pm$131.36)}\\ 
&Crawl like a worm & \highlight{4/5} (\textbf{116.48$\pm$72.99})  & \textemdash ~(\textbf{94.54$\pm$16.18})& 1/5 (\textbf{84.43$\pm$36.97}) &  2/5 (\textbf{127.14$\pm$10.11}) &0/5 (\textbf{127.48$\pm$24.81})&  3/5 (\textbf{143.86$\pm$23.09})\\ \hline \\
\multirow{6}{*}{\rotatebox{90}{\large\textbf{Quadruped}}}&Cartwheel & 1/5 (23.82$\pm$8.38)  & \textemdash ~(6.81$\pm$3.24)& 3/5 (9.45$\pm$7.43) &  1/5 (14.07$\pm$5.33) & \highlight{4/5} (15.6$\pm$4.07)&\highlight{4/5} \textbf{(30.88$\pm$0.95)}\\ 
&Dance &  \highlight{5/5} (13.89$\pm$6.19) & \textemdash ~(8.19$\pm$4.74)&  3/5 (6.24$\pm$2.34)&  1/5 (11.79$\pm$4.28)  & 4/5 {(15.11$\pm$7.71)}& \highlight{5/5} \textbf{(24.29$\pm$3.29)}\\ 
&Walk using three legs & 2/5 (15.98$\pm$9.61)  & \textemdash ~(6.70$\pm$3.61)& 2/5 (7.96$\pm$4.97)& 3/5 (7.46$\pm$1.75) & 4/5 \textbf{(24.70$\pm$6.56)} & \highlight{5/5} \textbf{(27.29$\pm$7.29)} \\ 
&Balancing on two legs & 2/5 (13.92$\pm$5.52)  & \textemdash ~(6.11$\pm$3.95)& 2/5 (6.69$\pm$8.19)& 2/5 (5.61$\pm$3.05)& \highlight{5/5} (7.20$\pm$3.78) &  \highlight{5/5} \textbf{(22.50$\pm$2.15)}\\ 
&Lie still & 1/5 \textbf{(82.93$\pm$77.13)}  & \textemdash ~(9.51$\pm$7.54)& \highlight{3/5} (6.88$\pm$8.43) & 2/5 {(67.96$\pm$34.91)} &\highlight{3/5} \textbf{(135.26$\pm$57.41)}&  2/5 \textbf{(100.22$\pm$52.45)}\\ 
&Handstand & 2/5 (16.13$\pm$11.66)  & \textemdash ~(4.18$\pm$0.88)& \highlight{4/5} (4.08$\pm$2.74)& 2/5 (6.80$\pm$2.85)  & \highlight{4/5} (15.47$\pm$11.76)& 3/5 \textbf{(50.65$\pm$2.92)}\\
\hline\\
\multirow{6}{*}{\rotatebox{90}{\large\textbf{Cheetah}}}&Lie down & \highlight{3/5} (219.35$\pm$58.77)   & \textemdash ~(203.99$\pm$16.39)& 2/5 (255.03$\pm$49.28)  & \highlight{3/5} (360.76$\pm$43.80)  &\highlight{3/5} \textbf{(468.52$\pm$17.45)}& 2/5 (202.67$\pm$34.75) \\
&Bunny hop & 3/5 (218.44$\pm$34.56)   & \textemdash ~(181.74$\pm$35.52)& 1/5 (192.78$\pm$53.53) &  3/5 (129.26$\pm$1.37) & 4/5 \textbf{(220.62$\pm$36.99)}&\highlight{5/5} \textbf{(224.70$\pm$6.46)} \\ 
&Jump high & 3/5 (172.10$\pm$39.02)  & \textemdash ~(184.41$\pm$49.01)& 0/5 (183.36$\pm$73.40) & \highlight{5/5} (148.82$\pm$27.88)  & \highlight{5/5} \textbf{(267.54$\pm$28.17)}&\highlight{5/5} \textbf{(232.23$\pm$41.65)}\\ 
&Jump on back legs and backflip & 3/5 (175.41$\pm$38.86)   & \textemdash ~(197.62$\pm$42.52) & 0/5 (169.67$\pm$50.78) & 2/5 (131.14$\pm$15.24)   & \highlight{5/5} \textbf{(293.28$\pm$71.71)}&\highlight{5/5} \textbf{(326.20$\pm$57.45)}\\ 
&Quadruped walk & 3/5 (\textbf{379.34$\pm$21.07})  & \textemdash ~(193.64$\pm$20.30) & 3/5 (187.24$\pm$46.67) & 3/5 (188.58$\pm$20.95)  &2/5 (318.10$\pm$37.53)&\highlight{4/5} \textbf{(388.02$\pm$37.76)} \\ 
&Stand in place like a dog & \highlight{4/5} \textbf{(478.85$\pm$43.69)}   & \textemdash ~(282.46$\pm$94.73) &  3/5 (238.58$\pm$88.26) &  0/5 (169.39$\pm$14.04)&2/5 (238.25$\pm$47.83)& 3/5 \textbf{(469.05$\pm$31.62)} \\ 
\hline \\
\multirow{6}{*}{\rotatebox{90}{\large\textbf{Stickman}}}&Lie down stable & 2/5 (201.03$\pm$82.97)  & \textemdash ~(13.21$\pm$9.27)& {4/5} (30.86$\pm$4.60) & 1/5 (28.89$\pm$3.49) & \highlight{5/5} \textbf{(686.56$\pm$386.66)} &{4/5} \textbf{(841.48$\pm$226.24)}\\ 
&Lunges &0/5 (63.46$\pm$36.27)   & \textemdash ~(\textbf{249.73$\pm$13.81})& 2/5(48.69$\pm$25.50) & 0/5 (38.77$\pm$3.69) & 4/5 \textbf{(244.82$\pm$58.80)} & \highlight{5/5} {(191.41$\pm$61.51)}\\ 
&Praying & 1/5 (50.84$\pm$22.27) & \textemdash ~(39.39$\pm$42.79)& 0/5 (49.13$\pm$32.51) &  0/5 (40.76$\pm$9.17)  & 3/5 \textbf{(192.75$\pm$42.20)}&\highlight{4/5} \textbf{(147.74$\pm$54.07)}\\ 
&Headstand & 2/5 (20.14$\pm$12.24) & \textemdash ~(14.54$\pm$7.13) &  2/5 (48.11$\pm$42.51) &1/5 (13.84$\pm$6.02) & \highlight{4/5} \textbf{(71.75$\pm$32.77)}&\highlight{4/5} \textbf{(71.87$\pm$6.28)}\\ 
&Punch & 2/5 (73.42$\pm$19.00)  & \textemdash ~(50.23$\pm$38.08)&3/5 (74.73$\pm$7.01)  & {4/5} (85.76$\pm$22.96)  & \highlight{5/5} \textbf{(181.08$\pm$69.58)}&  {4/5} \textbf{(216.44$\pm$37.81)}\\ 
&Plank &0/5 (374.70$\pm$361.27)   & \textemdash ~(507.46$\pm$289.41) & 0/5 (16.19$\pm$5.70)& 0/5 (66.62$\pm$53.43)& 1/5  (391.04$\pm$41.15)&\highlight{3/5} \textbf{(883.60$\pm$58.00)}\\ 
\midrule
&Average &51.2\% (148.97) &\text{Base Model} (114.50) &40\% (87.75) &44.8\% (118.79) &76.8\% (246.48) &\highlight{83.2} $\%$ \textbf{(295.11)} \\
\bottomrule 
\end{NiceTabular}}
}
\caption{Winrates computed by GPT-4o of policies output by different methods when compared to a base policies trained by TD3+Image-language reward. Bolded distribution-matching returns denote statistically significant improvement over
the second best method under a Mann-Whitney U
test with a significance level of 0.05.\looseness=-1}
\label{tab:rl_zero}
\end{table*}

\textbf{Setup:} We consider continuous control tasks from the DeepMind control suite (Cheetah, Walker, Quadruped, Stickman) and HumEnv (3D Humanoid). The selected environments are diverse and complex, with their proprioceptive state space ranging from 17-358 dimensions and action space ranging from 6-69 dimensions. All of the environments use an environment horizon of 1000 steps, except HumEnv, which has a horizon of 300 steps. We use Cheetah, Walker, Quadruped, and Stickman environments to demonstrate language to behavior generation capabilities of RLZero and use Stickman and HumEnv environments to demonstrate cross-embodied abilities of RLZero to directly produce behaviors from out-of-distribution videos, such as those available on YouTube or AI-generated. Our choice of Stickman and HumEnv environments for cross-embodied behavior generation is motivated by the ease of availability of third-person individual human videos on internet as these environments reflect human morphology. We use an off-the-shelf pretrained BFM~\citep{tirinzoni2024zero} trained for HumEnv with large MoCap datasets of real human behavior without assuming access to dataset used for training the BFM. This allows us to demonstrate the generality of the RLZero framework to plug a policy model trained with a different unsupervised RL algorithm to produce behaviors from cross-embodied videos. Our evaluation spans 25 tasks for language-to-behavior and 17 tasks for video-to-behavior.

\textbf{Implementation}: For the environments - Cheetah, Walker, Quadruped, and Stickman, we collect a dataset $d^O$ of environmental interactions of the form \{$s,a,s'$\} purely by a random exploration algorithm RND~\citep{burda2018exploration} (10 million transitions for Stickman and 5 million transitions for the rest of the environments). In the Stickman environment, we additionally augment this dataset with replay buffers of the agent trained for walking and running to increase the diversity of behaviors, increasing the dataset size by another 2 million transitions. The detailed composition of the datasets can be found in Appendix~\ref{ap:dataset}. We use the method from Section~\ref{sec:imagine} to learn a text-conditioned video generation model using dataset $d^O$ \textit{without requiring any in-domain task-trajectory labeling}. We use a successor-feature based unsupervised RL algorithm, Forward-Backward~\citep{fballreward}, to pretrain a general purpose policy BFM that can output a near-optimal policy given any reward function without test-time training or learning. For the projection step involved in language to behavior generation, we use SigLIP~\citep{zhai2023sigmoid}, an off-the-shelf image-text embedding model.
\begin{wrapfigure}{r}{0.5\textwidth} 
    \centering
    \includegraphics[width=0.9\linewidth]{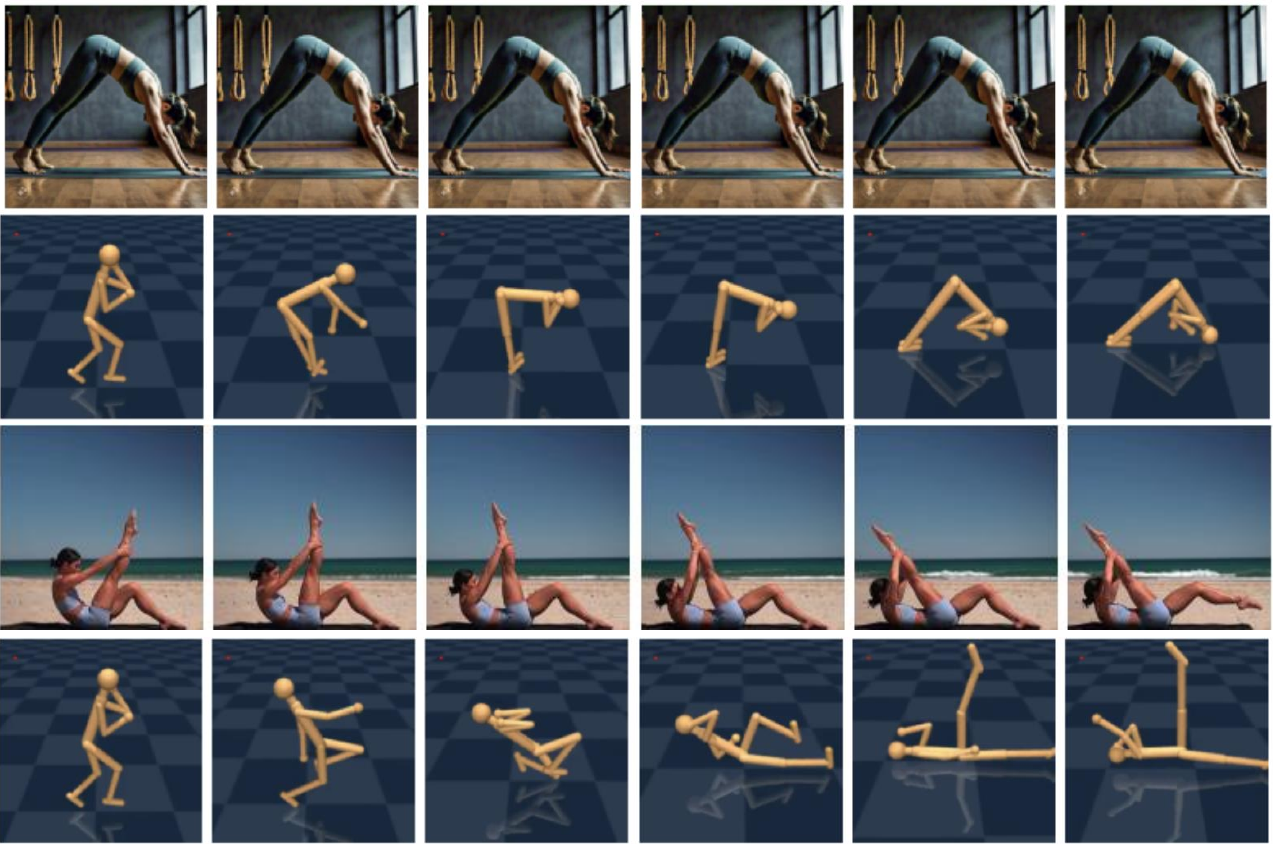}
    \includegraphics[width=0.92\linewidth]{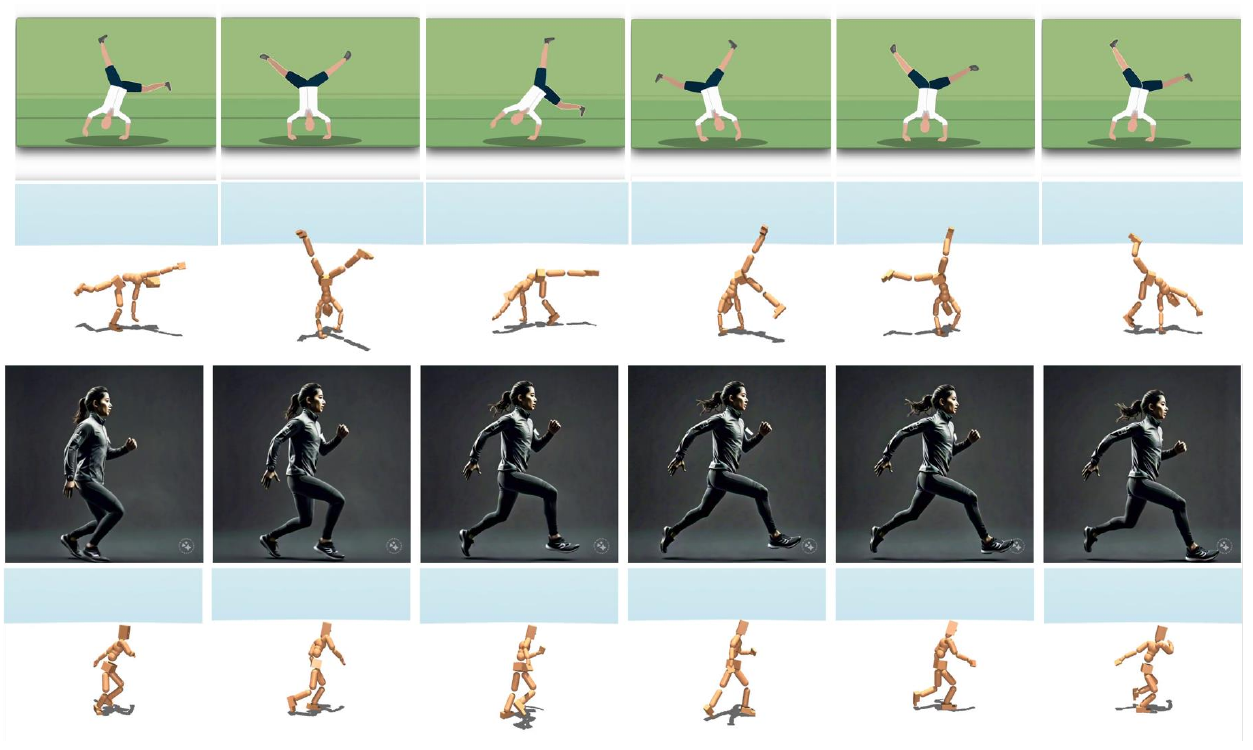}
    \caption{Examples for cross embodied imitation: \rlzero can mimic motions demonstrated in YouTube or AI generated videos zero-shot. Top 2 rows: Stickman (2D Humanoid), Bottom 2 rows: SMPL 3D Humanoid. }
    \label{fig:rlzero_cross_embodied_qualitative}
\end{wrapfigure}


\textbf{Baselines:} 
For our evaluations, we consider the setting where the agent is not allowed environment interactions for learning during test time for a fair comparison and all baselines have access to same offline dataset $d^O$. This setting reflects the ability of the agents to use prior interaction data to learn meaningful behaviors. We compare against approaches that use foundation models trained with internet scale data to label rewards for offline transitions conditioned on text. These rewards can be computed using cosine similarity with text prompt per image observation of the agent as in VLM-RL~\citep{rocamonde2023vision} or per sequence of image observations using a video-embedding model. We use SigLIP as our image-text embedding model and InternVideo2 as a video-text embedding model for a fair comparison. Video-language embeddings take into account context and can potentially lead to more accurate reward estimation. Once the rewards are obtained, we use TD3~\citep{fujimoto2018addressing} and IQL~\citep{kostrikov2021offline} as the representative offline RL algorithms to obtain policies. We also compare to a recent approach GenRL, that performs model-based RL in the environment to learn a policy at test time by maximizing similarity to text-conditioned generated video. All baselines train a seperate policy for each task whereas \rlzero trains a single policy for all tasks per environment using Forward-Backward unsupervised RL algorithm that uses TD3 as the underlying RL method.


\textbf{Evaluation Protocol}: We provide both qualitative and quantitative comparisons with baselines. The evaluation of behaviors generated given a text prompt or a video can be challenging as, unlike the traditional RL setting, we do not have access to a ground truth reward function. Instead, we have prompts that can be inherently ambiguous but reflect the reality of human-robot interaction. An obvious evaluation metric is to ask humans how much the generated behavior resembles their expectation of the behavior given the prompt. We use multimodal LLMs to qualitatively evaluate such preferences as a proxy to human preferences, as recent studies~\citep{chen2024mllm} have shown them to be correlated (up to 79.3\%). To quantitatively compare the generated behaviors, we compute returns under a discriminator as a reward function that is trained to distinguish proprioceptive states corresponding to the projected frames of video (imagined or real) vs. offline dataset $d^O$ similar to~\cite{ma2022smodice}. 

\subsection{Benchmarking Language to Continuous Control}
\label{exp:continuous_control}

The ability to specify prompts and generate agent behavior allows us to explore complex behaviors that might have required complicated reward function design. We curate a set of 25 tasks across 4 DM-control environments. Each of the agents has unique capabilities as a result of its embodiment, and the prompts are specified to be reasonable tasks to expect for the specific domain. For each prompt, we generate behaviors for 5 seeds. The qualitative performance of any given method is evaluated as the win rate over the base method. We chose the base model for our comparisons as the policies trained via TD3 on image-language rewards. For each seed, we present the observation frames that the output policy generates and pass it to a Multimodal LLM capable of video understanding, which is used as a judge. Since the number of tokens can get quite large with the long default horizon of the agent (1000 frames), we subsample the videos by choosing every 8 frames and selecting the first 64 frames of size $256\times256$. We observed this subsampling to retain temporal consistency and the effective horizon ($8\times 32=256$) to be long enough to demonstrate the task requested by the prompt. We consider a more fine-grained quantitative metric for comparison -- average return under the distribution-matching reward function. We learn a discriminator between the projected states for a given imagination ($\rho^E$) and the offline interaction dataset $\rho$. Under the shaped reward function (Section~\ref{method:zero_shot_imitation}), $r(s) = e^\nu(s)=\rho^E/\rho$ we compute returns for all the methods; a higher return indicates that a method is better able to match the projected imaginations.


\begin{wraptable}{!t}{0.70\textwidth}
  \centering
    {\fontsize{6.6}{7}\selectfont
  \renewcommand{\arraystretch}{1.1}
  \begin{tabularx}{0.68\textwidth}{c|Y|c|c}
    \hline
       & \textbf{Video Descriptions} 
       & \textbf{SMODICE} 
       & \textbf{\rlzero} \\
    \hline
    \multirow{10}{*}{\rotatebox{90}{\makecell[c]{\textbf{Stickman}\\(2D Humanoid)}}}
      & Human in backflip position  
          & — ($16.05\pm3.44$)  
          & 2/5 ($14.27\pm0.02$) \\
      & Downward‑facing dog yoga pose  
          & — ($15.63\pm2.74$)  
          & 1/5 ($14.11\pm0.04$) \\
      & Cow yoga pose  
          & — ($6.72\pm7.81$)  
          & \highlight{5/5} ($\mathbf{14.99\pm0.02}$) \\
      & Downward dog one leg raised  
          & — ($7.04\pm3.32$)  
          & \highlight{5/5} ($\mathbf{15.35\pm0.03}$) \\
      & Lying on back, one leg raised  
          & — ($9.07\pm2.07$)  
          & \highlight{5/5} ($\mathbf{12.55\pm0.01}$) \\
      & Lying on back, both legs raised  
          & — ($8.45\pm3.71$)  
          & \highlight{5/5} ($\mathbf{9.55\pm0.06}$) \\
      & High plank yoga pose  
          & — ($12.83\pm7.76$)  
          & \highlight{5/5} ($\mathbf{15.08\pm0.03}$) \\
      & Sitting down, legs out front  
          & — ($12.01\pm1.06$)  
          & \highlight{5/5} ($\mathbf{15.24\pm0.02}$) \\
      & Warrior III pose  
          & — ($17.27\pm5.09$)  
          & 3/5 ($15.22\pm0.01$) \\
      & Front splits  
          & — ($13.44\pm2.39$)  
          & 4/5 ($\mathbf{15.21\pm0.01}$) \\
    \hline
    \multirow{7}{*}{\rotatebox{90}{\makecell[c]{\textbf{HumEnv}\\(3D Humanoid)}}}
      & A karate kick position  
          & — ($50.02\pm0.02$)  
          & \highlight{5/5} ($\mathbf{199.90\pm0.01}$) \\
      & A cat doing a handstand  
          & — ($0.19\pm0.24$)  
          & \highlight{5/5} ($\mathbf{199.86\pm0.02}$) \\
      & An arabesque ballet position  
          & — ($10.02\pm0.01$)  
          & \highlight{5/5} ($\mathbf{199.92\pm0.04}$) \\
      & Animated wikiHow cartwheel  
          & — ($0.19\pm0.24$)  
          & \highlight{5/5} ($\mathbf{199.91\pm0.04}$) \\
      & Running  
          & — ($58.08\pm0.46$)  
          & \highlight{5/5} ($\mathbf{199.87\pm0.01}$) \\
      & Lying crunches  
          & — ($0.10\pm0.20$)  
          & \highlight{5/5} ($\mathbf{199.89\pm0.04}$) \\
      & Plank position  
          & — ($0.048\pm0.03$)  
          & \highlight{5/5} ($\mathbf{199.91\pm0.04}$) \\
    \hline
  \end{tabularx}
  \caption{Cross-Embodied Eval.: Distribution Matching Return \& Winrates}
  \vspace{-0.2cm}
  \label{tab:win_rates_cross_embodiment_main}}
\end{wraptable}

Table~\ref{tab:rl_zero} demonstrates the win rates by different methods when evaluated by GPT-4o-preview as well as distribution matching returns. We find that \rlzero achieves a win rate of $83.2\%$ when compared to the best baseline (GenRL), which achieves a win rate of $76.8\%$. \rlzero shows statistically significant improvement over baselines under distribution matching returns. This indicates that \rlzero is able to produce qualitatively better behaviors as well as do quantitatively better imitation than baselines. An important point to note is that all methods except \rlzero requires test-time learning, whereas \rlzero provides a gradient-free approach to instantly generating behaviors from text prompts. As an example, the closest performing baseline, GenRL requires test-time training with every task averaging $\approx$ 3-5 hours of training on NVIDIA-A40 GPU compared to our method, which requires $\approx$ 25 seconds to output a policy. The main bottleneck in \rlzero is the projection step, which requires finding the closest real observation to each frame under a CLIP-like similarity metric and can be made significantly faster by using vector-databases specialized for retrieval. Figure~\ref{fig:rlzero_qualitative} shows examples of behaviors output by \rlzero on some of the prompts from our evaluation set. Additionally, all baselines require training a new policy model at test time for each prompt, as opposed to \rlzero, which uses a single policy, making our approach parameter efficient. \looseness=-1

\vspace{-4pt}
\subsection{Video to Continuous Control: Can \rlzero succeed at cross-embodied imitation?}

The intermediate stage in \rlzero of matching the closest observations in the offline dataset to a frame from a video is based on semantic similarity. This means that \rlzero can generalize to out-of-domain demonstrations building on the zero-shot generalization of CLIP-like representations. Subsequently, we can skip the \textit{imagine} step completely if we are given an expert video demonstration. To investigate this, we consider a collection of videos scraped from Youtube as well as videos generated by open-source video generation tools like MetaAI and empirically test if \rlzero is able to replicate the behaviors. We focus on Humanoid environments (Stickman, i.e 2D Humanoid, and HumEnv, i.e 3D Humanoid) here. For HumEnv~\citep{loper2023smpl}, we use an open-source BFM~\citep{tirinzoni2024zero} for policy generation given a reward function and do not assume access to the dataset used for training the BFM.
\label{exp:cross-embodiment}
Table~\ref{tab:win_rates_cross_embodiment_main} shows the results for cross-embodied imitation across 17 video-clips. We use similar metrics to Section~\ref{exp:continuous_control}, but modified the GPT-4o prompt to take in the frames from the original video instead of a specified task description. We compare against SMODICE~\citep{ma2022smodice} which allows for using state-only observational data in conjunction with suboptimal offline data for imitation learning. This allows us to ablate the quality of imitation produced by a successor measure-based method that uses one policy model for all tasks as opposed to SMODICE which trains a new policy for each task. GenRL requires a world model of the environment which is not available for HumEnv as we only have access to the pretrained policy. \rlzero achieves a win rate of 80\% against SMODICE for Stickman and 100\% for HumEnv. This matches the observation from~\citep{pirotta2023fast} that DICE-based methods lag behind in performance on observation-only imitation tasks. Figure~\ref{fig:rlzero_cross_embodied_qualitative} shows a qualitative comparison of the video and the obtained behavior on a few videos. Details about videos used can be found in Appendix~\ref{ap:cross_embodiment}. 

\vspace{-0.3cm}
\subsection{Ablation and Limitations}
\label{exp:ablation}
\begin{wrapfigure}{r}{0.5\textwidth} 
    \centering
    \includegraphics[width=\linewidth]{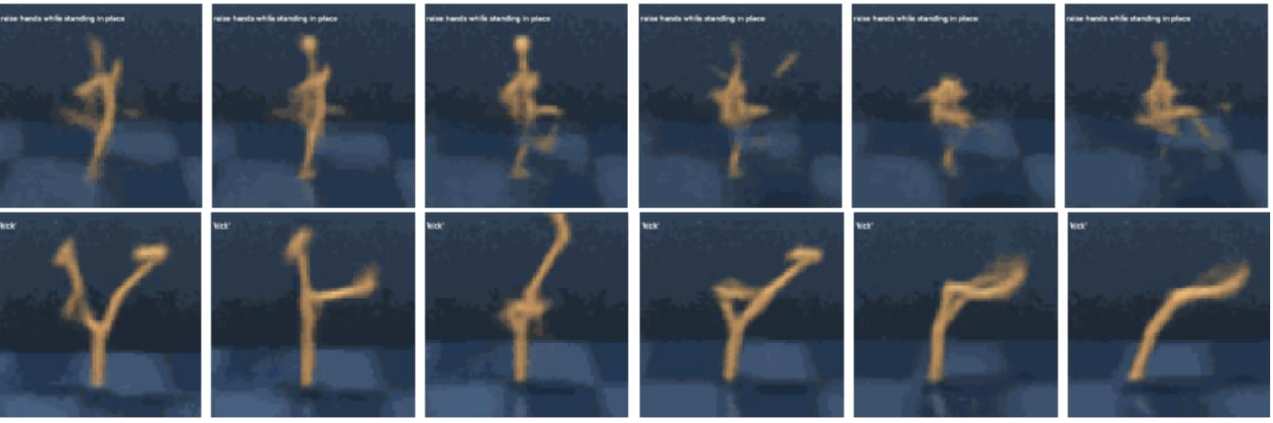}
    \par\smallskip
    \small (a) Imagined behaviors: Top: stickman: 'raise hand while standing in place', Bottom: walker: 'kick'
    \par\medskip
    \includegraphics[width=\linewidth]{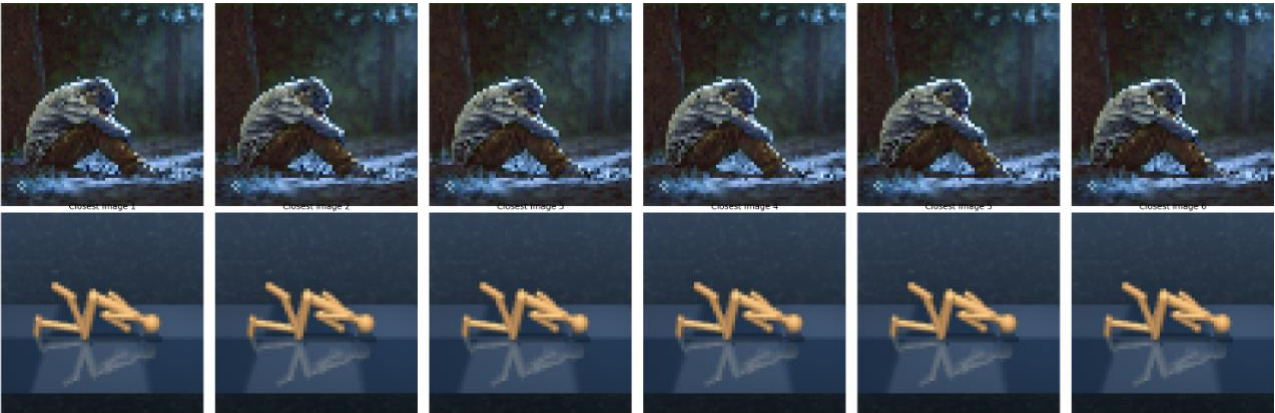}
    \par\smallskip
    \small (b) Failed projection for a cross-embodied video.
    \caption{Failure Cases in \rlzero}
    \label{fig:failure_cases}
    \vspace{-0.2cm}
\end{wrapfigure}
\textbf{Imagination-free behavior generation:} While the imagine, project, and imitate framework allows for the interpretability of the agent's behavior, we investigate if we can amortize the imagination and embedding search cost by directly mapping the language embedding to the skill embedding in the BFM's latent space. For this, we consider sampling $z$ uniformly in the latent space of the BFM and embedding the generated image observation sequence through a video-language encoder, which we denote by $e$. Given the observation sequence, we generate the $z_{imit}$ using the zero-shot inference process and learn a mapping from  $e\rightarrow z_{imit}$ using a small 3-layer MLP. On the same tasks considered in Fig~\ref{fig:rlzero_cross_embodied_qualitative}, we observe imagination-free \rlzero to have a win rate of 65.71\% over TD3 base model on Walker environment when compared to \rlzero that had a win rate of $91.4\%$ (detailed results in Table~\ref{ap:imagination_free_results}). A thorough explanation of imagination-free \rlzero can be found in Appendix~\ref{ap:imagination_free}. \looseness=-1

\textbf{Failures:} Our proposed method \rlzero is not without failures. The stages of imagination and projection can fail individually, but the failures remain interpretable, i.e., by investigating the videos and the closest state match, we can comment on the agent's ability to faithfully complete that task to a certain extent.  

\textbf{1. What I cannot imagine, I cannot imitate: } The video generation model used in our work is fairly small and limited in capability. We encountered limitations when generating complex behaviors with this model and found it to be sensitive to prompt engineering. Fortunately, as models get bigger and are trained on a larger set of data, this limitation can be overcome. Figure~\ref{fig:failure_cases}a shows some examples of these failures with the corresponding prompts.

\textbf{2. Limitation of semantic search-based image retrieval: } In this work, we used SigLIP, which has shown commendable performance for image retrieval tasks. We observed failure cases in the following scenarios (e.g. Figure~\ref{fig:failure_cases}b): a) Background distractors: We observe the image-similarity to latch on to features from the background and produce incorrect retrieval; b) Rough symmetries: In tasks where the agent is roughly symmetric (e.g. Walker when the head and legs are almost identical with a slight difference in width) the image retrieval fails by giving an incorrect permutation w.r.t the rough symmetries.

\vspace{-0.3cm}
\section{Conclusion}
\vspace{-0.3cm}

Specifying desired behavior to a learned agent is a longstanding problem in sequential decision-making. In this work, we take a step towards leveraging intuitive mediums such as language and cross-embodiment to specify desired policies. The \rlzero framework bypasses costly annotations, reward hacking, and deployment time training by leveraging the rich capabilities of behavioral foundation models. Our introduced framework demonstrates a technique for directly translating from intuitive mediums for task specification to the latent space of policies without any test-time training. We demonstrate that \rlzero not only recovers policies with zero gradient steps at evaluation time, but these policies also achieve qualitative and quantitative improvement over prior baselines that leverage VLMs as a source or reward or perform test-time training. \looseness=-1
\section*{Acknowledgments}

We thank Matteo Pirotta, Ahmed Touati, Andrea Tirinzoni, Alessandro Lazaric and Yann Ollivier for enlightening discussion on unsupervised RL. This work has in part taken place in the Safe, Correct, and Aligned Learning and
Robotics Lab (SCALAR) at The University of Massachusetts Amherst and Machine Intelligence
through Decision-making and Interaction (MIDI) Lab at The University of Texas at Austin. SCALAR
research is supported in part by the NSF (IIS-2323384), the Center for AI Safety (CAIS), and the Long-Term Future Fund. 
HS, SA, SP, MR, and AZ are supported by NSF 2340651, NSF 2402650, DARPA HR00112490431, and ARO W911NF-24-1-0193. 
This work has taken place in the Learning Agents Research
Group (LARG) at the Artificial Intelligence Laboratory, The University
of Texas at Austin.  LARG research is supported in part by the
National Science Foundation (FAIN-2019844, NRT-2125858), the Office of
Naval Research (N00014-24-1-2550), Army Research Office
(W911NF-17-2-0181, W911NF-23-2-0004, W911NF-25-1-0065), DARPA
(Cooperative Agreement HR00112520004 on Ad Hoc Teamwork), Lockheed
Martin, and Good Systems, a research grand challenge at the University
of Texas at Austin.  The views and conclusions contained in this
document are those of the authors alone.  Peter Stone serves as the
Chief Scientist of Sony AI and receives financial compensation for
that role.  The terms of this arrangement have been reviewed and
approved by the University of Texas at Austin in accordance with its
policy on objectivity in research.

\bibliography{refs}

@article{luo2024text,
  title={Text-aware diffusion for policy learning},
  author={Luo, Calvin and He, Mandy and Zeng, Zilai and Sun, Chen},
  journal={Advances in Neural Information Processing Systems},
  volume={37},
  pages={46226--46253},
  year={2024}
}

@misc{blattmann2023stablevideodiffusionscaling,
      title={Stable Video Diffusion: Scaling Latent Video Diffusion Models to Large Datasets}, 
      author={Andreas Blattmann and Tim Dockhorn and Sumith Kulal and Daniel Mendelevitch and Maciej Kilian and Dominik Lorenz and Yam Levi and Zion English and Vikram Voleti and Adam Letts and Varun Jampani and Robin Rombach},
      year={2023},
      eprint={2311.15127},
      archivePrefix={arXiv},
      primaryClass={cs.CV},
      url={https://arxiv.org/abs/2311.15127}, 
}

@inproceedings{
liang2022mind,
title={Mind the Gap: Understanding the Modality Gap in Multi-modal Contrastive Representation Learning},
author={Weixin Liang and Yuhui Zhang and Yongchan Kwon and Serena Yeung and James Zou},
booktitle={Advances in Neural Information Processing Systems},
editor={Alice H. Oh and Alekh Agarwal and Danielle Belgrave and Kyunghyun Cho},
year={2022},
url={https://openreview.net/forum?id=S7Evzt9uit3}
}

@inproceedings{Radford2021LearningTV,
  title={Learning Transferable Visual Models From Natural Language Supervision},
  author={Alec Radford and Jong Wook Kim and Chris Hallacy and A. Ramesh and Gabriel Goh and Sandhini Agarwal and Girish Sastry and Amanda Askell and Pamela Mishkin and Jack Clark and Gretchen Krueger and Ilya Sutskever},
  booktitle={ICML},
  year={2021}
}

@software{ilharco_gabriel_2021_5143773,
  author       = {Ilharco, Gabriel and
                  Wortsman, Mitchell and
                  Wightman, Ross and
                  Gordon, Cade and
                  Carlini, Nicholas and
                  Taori, Rohan and
                  Dave, Achal and
                  Shankar, Vaishaal and
                  Namkoong, Hongseok and
                  Miller, John and
                  Hajishirzi, Hannaneh and
                  Farhadi, Ali and
                  Schmidt, Ludwig},
  title        = {OpenCLIP},
  month        = jul,
  year         = 2021,
  note         = {If you use this software, please cite it as below.},
  publisher    = {Zenodo},
  version      = {0.1},
  doi          = {10.5281/zenodo.5143773},
  url          = {https://doi.org/10.5281/zenodo.5143773}
}

@article{kondratyuk2023videopoet,
  title={Videopoet: A large language model for zero-shot video generation},
  author={Kondratyuk, Dan and Yu, Lijun and Gu, Xiuye and Lezama, Jos{\'e} and Huang, Jonathan and Schindler, Grant and Hornung, Rachel and Birodkar, Vighnesh and Yan, Jimmy and Chiu, Ming-Chang and others},
  journal={arXiv preprint arXiv:2312.14125},
  year={2023}
}

@misc{tong2022videomaemaskedautoencodersdataefficient,
      title={VideoMAE: Masked Autoencoders are Data-Efficient Learners for Self-Supervised Video Pre-Training}, 
      author={Zhan Tong and Yibing Song and Jue Wang and Limin Wang},
      year={2022},
      eprint={2203.12602},
      archivePrefix={arXiv},
      primaryClass={cs.CV},
      url={https://arxiv.org/abs/2203.12602}, 
}

@misc{wang2024internvideo2scalingfoundationmodels,
      title={InternVideo2: Scaling Foundation Models for Multimodal Video Understanding}, 
      author={Yi Wang and Kunchang Li and Xinhao Li and Jiashuo Yu and Yinan He and Chenting Wang and Guo Chen and Baoqi Pei and Ziang Yan and Rongkun Zheng and Jilan Xu and Zun Wang and Yansong Shi and Tianxiang Jiang and Songze Li and Hongjie Zhang and Yifei Huang and Yu Qiao and Yali Wang and Limin Wang},
      year={2024},
      eprint={2403.15377},
      archivePrefix={arXiv},
      primaryClass={cs.CV},
      url={https://arxiv.org/abs/2403.15377}, 
}

@article{garg2021iq,
  title={IQ-Learn: Inverse soft-Q Learning for Imitation},
  author={Garg, Divyansh and Chakraborty, Shuvam and Cundy, Chris and Song, Jiaming and Ermon, Stefano},
  journal={Advances in Neural Information Processing Systems},
  volume={34},
  pages={4028--4039},
  year={2021}
}

@article{ma2022smodice,
  title={SMODICE: Versatile Offline Imitation Learning via State Occupancy Matching},
  author={Ma, Yecheng Jason and Shen, Andrew and Jayaraman, Dinesh and Bastani, Osbert},
  journal={arXiv preprint arXiv:2202.02433},
  year={2022}
}

@article{ho2016generative,
  title={Generative adversarial imitation learning},
  author={Ho, Jonathan and Ermon, Stefano},
  journal={Advances in neural information processing systems},
  volume={29},
  pages={4565--4573},
  year={2016}
}

@inproceedings{ghasemipour2020divergence,
  title={A divergence minimization perspective on imitation learning methods},
  author={Ghasemipour, Seyed Kamyar Seyed and Zemel, Richard and Gu, Shixiang},
  booktitle={Conference on Robot Learning},
  pages={1259--1277},
  year={2020},
  organization={PMLR}
}

@book{sutton2018reinforcement,
  title={Reinforcement learning: An introduction},
  author={Sutton, Richard S and Barto, Andrew G},
  year={2018},
  publisher={MIT press}
}

@article{intelligence2025pi_,
  title={$\pi_{\{0.5\}}$: a Vision-Language-Action Model with Open-World Generalization},
  author={Intelligence, Physical and Black, Kevin and Brown, Noah and Darpinian, James and Dhabalia, Karan and Driess, Danny and Esmail, Adnan and Equi, Michael and Finn, Chelsea and Fusai, Niccolo and others},
  journal={arXiv preprint arXiv:2504.16054},
  year={2025}
}

@book{puterman2014markov,
  title={Markov decision processes: discrete stochastic dynamic programming},
  author={Puterman, Martin L},
  year={2014},
  publisher={John Wiley \& Sons}
}

@article{sikchi2025fast,
  title={Fast Adaptation with Behavioral Foundation Models},
  author={Sikchi, Harshit and Tirinzoni, Andrea and Touati, Ahmed and Xu, Yingchen and Kanervisto, Anssi and Niekum, Scott and Zhang, Amy and Lazaric, Alessandro and Pirotta, Matteo},
  journal={arXiv preprint arXiv:2504.07896},
  year={2025}
}

@article{krakovna2018specification,
  title={Specification gaming examples in AI},
  author={Krakovna, Victoria},
  journal={Available at vkrakovna. wordpress. com},
  year={2018}
}

@inproceedings{fujimoto2018addressing,
  title={Addressing function approximation error in actor-critic methods},
  author={Fujimoto, Scott and Hoof, Herke and Meger, David},
  booktitle={International conference on machine learning},
  pages={1587--1596},
  year={2018},
  organization={PMLR}
}

@inproceedings{ni2021f,
  title={f-irl: Inverse reinforcement learning via state marginal matching},
  author={Ni, Tianwei and Sikchi, Harshit and Wang, Yufei and Gupta, Tejus and Lee, Lisa and Eysenbach, Ben},
  booktitle={Conference on Robot Learning},
  pages={529--551},
  year={2021},
  organization={PMLR}
}

@article{kostrikov2021offline,
  title={Offline reinforcement learning with implicit q-learning},
  author={Kostrikov, Ilya and Nair, Ashvin and Levine, Sergey},
  journal={arXiv preprint arXiv:2110.06169},
  year={2021}
}

@article{mnih2013playing,
  title={Playing atari with deep reinforcement learning},
  author={Mnih, Volodymyr and Kavukcuoglu, Koray and Silver, David and Graves, Alex and Antonoglou, Ioannis and Wierstra, Daan and Riedmiller, Martin},
  journal={arXiv preprint arXiv:1312.5602},
  year={2013}
}

@article{fujimoto2021minimalist,
  title={A minimalist approach to offline reinforcement learning},
  author={Fujimoto, Scott and Gu, Shixiang Shane},
  journal={Advances in neural information processing systems},
  volume={34},
  pages={20132--20145},
  year={2021}
}

@article{ma2022vip,
  title={VIP: Towards Universal Visual Reward and Representation via Value-Implicit Pre-Training},
  author={Ma, Yecheng Jason and Sodhani, Shagun and Jayaraman, Dinesh and Bastani, Osbert and Kumar, Vikash and Zhang, Amy},
  journal={arXiv preprint arXiv:2210.00030},
  year={2022}
}

@article{durugkar2021adversarial,
  title={Adversarial intrinsic motivation for reinforcement learning},
  author={Durugkar, Ishan and Tec, Mauricio and Niekum, Scott and Stone, Peter},
  journal={Advances in Neural Information Processing Systems},
  volume={34},
  pages={8622--8636},
  year={2021}
}

@article{blier2021learning,
  title={Learning successor states and goal-dependent values: A mathematical viewpoint},
  author={Blier, L{\'e}onard and Tallec, Corentin and Ollivier, Yann},
  journal={arXiv preprint arXiv:2101.07123},
  year={2021}
}

@article{dayan1993improving,
  title={Improving generalization for temporal difference learning: The successor representation},
  author={Dayan, Peter},
  journal={Neural computation},
  volume={5},
  number={4},
  pages={613--624},
  year={1993},
  publisher={MIT Press}
}

@article{eysenbach2018diversity,
	author = {Benjamin Eysenbach and Abhishek Gupta and Julian Ibarz and Sergey Levine},
	bibsource = {dblp computer science bibliography, https://dblp.org},
	biburl = {https://dblp.org/rec/journals/corr/abs-1802-06070.bib},
	date-added = {2023-09-18 10:37:21 +0200},
	date-modified = {2023-09-18 10:37:24 +0200},
	eprint = {1802.06070},
	eprinttype = {arXiv},
	journal = {CoRR},
	timestamp = {Thu, 20 Dec 2018 16:30:14 +0100},
	title = {Diversity is All You Need: Learning Skills without a Reward Function},
	url = {http://arxiv.org/abs/1802.06070},
	volume = {abs/1802.06070},
	year = {2018},
	bdsk-url-1 = {http://arxiv.org/abs/1802.06070}}

@inproceedings{zeroshot,
	author = {Ahmed Touati and J{\'{e}}r{\'{e}}my Rapin and Yann Ollivier},
	booktitle = {{ICLR}},
	publisher = {OpenReview.net},
	title = {Does Zero-Shot Reinforcement Learning Exist?},
	year = {2023}}

@article{tassa2018deepmind,
	author = {Yuval Tassa and Yotam Doron and Alistair Muldal and Tom Erez and Yazhe Li and Diego de Las Casas and David Budden and Abbas Abdolmaleki and Josh Merel and Andrew Lefrancq and Timothy P. Lillicrap and Martin A. Riedmiller},
	journal = {CoRR},
	title = {DeepMind Control Suite},
	volume = {abs/1801.00690},
	year = {2018}}

@inproceedings{demodice,
	author = {Geon{-}Hyeong Kim and Seokin Seo and Jongmin Lee and Wonseok Jeon and HyeongJoo Hwang and Hongseok Yang and Kee{-}Eung Kim},
	booktitle = {{ICLR}},
	publisher = {OpenReview.net},
	title = {DemoDICE: Offline Imitation Learning with Supplementary Imperfect Demonstrations},
	year = {2022}}

@inproceedings{smodice,
	author = {Yecheng Jason Ma and Andrew Shen and Dinesh Jayaraman and Osbert Bastani},
	booktitle = {{ICML}},
	pages = {14639--14663},
	publisher = {{PMLR}},
	series = {Proceedings of Machine Learning Research},
	title = {Versatile Offline Imitation from Observations and Examples via Regularized State-Occupancy Matching},
	volume = {162},
	year = {2022}}

@inproceedings{fballreward,
	author = {Ahmed Touati and Yann Ollivier},
	booktitle = {NeurIPS},
	pages = {13--23},
	title = {Learning One Representation to Optimize All Rewards},
	year = {2021}}

@inproceedings{
sikchi2024dual,
title={Dual {RL}: Unification and New Methods for Reinforcement and Imitation Learning},
author={Harshit Sikchi and Qinqing Zheng and Amy Zhang and Scott Niekum},
booktitle={The Twelfth International Conference on Learning Representations},
year={2024},
url={https://openreview.net/forum?id=xt9Bu66rqv}
}

@inproceedings{pirotta2023fast,
  title={Fast imitation via behavior foundation models},
  author={Pirotta, Matteo and Tirinzoni, Andrea and Touati, Ahmed and Lazaric, Alessandro and Ollivier, Yann},
  booktitle={NeurIPS 2023 Foundation Models for Decision Making Workshop},
  year={2023}
}

@article{amodei2016concrete,
  title={Concrete problems in AI safety},
  author={Amodei, Dario and Olah, Chris and Steinhardt, Jacob and Christiano, Paul and Schulman, John and Man{\'e}, Dan},
  journal={arXiv preprint arXiv:1606.06565},
  year={2016}
}

@article{dulac2021challenges,
  title={Challenges of real-world reinforcement learning: definitions, benchmarks and analysis},
  author={Dulac-Arnold, Gabriel and Levine, Nir and Mankowitz, Daniel J and Li, Jerry and Paduraru, Cosmin and Gowal, Sven and Hester, Todd},
  journal={Machine Learning},
  volume={110},
  number={9},
  pages={2419--2468},
  year={2021},
  publisher={Springer}
}

@article{goyal2021zero,
  title={Zero-shot task adaptation using natural language},
  author={Goyal, Prasoon and Mooney, Raymond J and Niekum, Scott},
  journal={arXiv preprint arXiv:2106.02972},
  year={2021}
}

@inproceedings{goyal2021pixl2r,
  title={Pixl2r: Guiding reinforcement learning using natural language by mapping pixels to rewards},
  author={Goyal, Prasoon and Niekum, Scott and Mooney, Raymond},
  booktitle={Conference on Robot Learning},
  pages={485--497},
  year={2021},
  organization={PMLR}
}

@article{o2023open,
  title={Open x-embodiment: Robotic learning datasets and rt-x models},
  author={O'Neill, Abby and Rehman, Abdul and Gupta, Abhinav and Maddukuri, Abhiram and Gupta, Abhishek and Padalkar, Abhishek and Lee, Abraham and Pooley, Acorn and Gupta, Agrim and Mandlekar, Ajay and others},
  journal={arXiv preprint arXiv:2310.08864},
  year={2023}
}

@article{mazzaglia2024multimodal,
  title={Multimodal foundation world models for generalist embodied agents},
  author={Mazzaglia, Pietro and Verbelen, Tim and Dhoedt, Bart and Courville, Aaron and Rajeswar, Sai},
  journal={arXiv preprint arXiv:2406.18043},
  year={2024}
}

@article{wang2022internvideo,
  title={Internvideo: General video foundation models via generative and discriminative learning},
  author={Wang, Yi and Li, Kunchang and Li, Yizhuo and He, Yinan and Huang, Bingkun and Zhao, Zhiyu and Zhang, Hongjie and Xu, Jilan and Liu, Yi and Wang, Zun and others},
  journal={arXiv preprint arXiv:2212.03191},
  year={2022}
}

@article{burda2018exploration,
  title={Exploration by random network distillation},
  author={Burda, Yuri and Edwards, Harrison and Storkey, Amos and Klimov, Oleg},
  journal={arXiv preprint arXiv:1810.12894},
  year={2018}
}

@inproceedings{jang2022bc,
  title={Bc-z: Zero-shot task generalization with robotic imitation learning},
  author={Jang, Eric and Irpan, Alex and Khansari, Mohi and Kappler, Daniel and Ebert, Frederik and Lynch, Corey and Levine, Sergey and Finn, Chelsea},
  booktitle={Conference on Robot Learning},
  pages={991--1002},
  year={2022},
  organization={PMLR}
}

@inproceedings{agarwal2024protosuccessormeasurerepresenting,
  title={Proto Successor Measure: Representing the Behavior Space of an RL Agent},
  author={Agarwal, Siddhant and Sikchi, Harshit and Stone, Peter and Zhang, Amy},
  booktitle={Forty-second International Conference on Machine Learning}
}

@article{park2024foundation,
  title={Foundation policies with hilbert representations},
  author={Park, Seohong and Kreiman, Tobias and Levine, Sergey},
  journal={arXiv preprint arXiv:2402.15567},
  year={2024}
}

@article{wu2018laplacian,
  title={The laplacian in rl: Learning representations with efficient approximations},
  author={Wu, Yifan and Tucker, George and Nachum, Ofir},
  journal={arXiv preprint arXiv:1810.04586},
  year={2018}
}

@article{chen2024mllm,
  title={Mllm-as-a-judge: Assessing multimodal llm-as-a-judge with vision-language benchmark},
  author={Chen, Dongping and Chen, Ruoxi and Zhang, Shilin and Liu, Yinuo and Wang, Yaochen and Zhou, Huichi and Zhang, Qihui and Wan, Yao and Zhou, Pan and Sun, Lichao},
  journal={arXiv preprint arXiv:2402.04788},
  year={2024}
}

@inproceedings{zhai2023sigmoid,
  title={Sigmoid loss for language image pre-training},
  author={Zhai, Xiaohua and Mustafa, Basil and Kolesnikov, Alexander and Beyer, Lucas},
  booktitle={Proceedings of the IEEE/CVF International Conference on Computer Vision},
  pages={11975--11986},
  year={2023}
}

@article{yarats2022don,
  title={Don't change the algorithm, change the data: Exploratory data for offline reinforcement learning},
  author={Yarats, Denis and Brandfonbrener, David and Liu, Hao and Laskin, Michael and Abbeel, Pieter and Lazaric, Alessandro and Pinto, Lerrel},
  journal={arXiv preprint arXiv:2201.13425},
  year={2022}
}

@article{rocamonde2023vision,
  title={Vision-language models are zero-shot reward models for reinforcement learning},
  author={Rocamonde, Juan and Montesinos, Victoriano and Nava, Elvis and Perez, Ethan and Lindner, David},
  journal={arXiv preprint arXiv:2310.12921},
  year={2023}
}

@article{baumli2023vision,
  title={Vision-language models as a source of rewards},
  author={Baumli, Kate and Baveja, Satinder and Behbahani, Feryal and Chan, Harris and Comanici, Gheorghe and Flennerhag, Sebastian and Gazeau, Maxime and Holsheimer, Kristian and Horgan, Dan and Laskin, Michael and others},
  journal={arXiv preprint arXiv:2312.09187},
  year={2023}
}

@inproceedings{radford2021learning,
  title={Learning transferable visual models from natural language supervision},
  author={Radford, Alec and Kim, Jong Wook and Hallacy, Chris and Ramesh, Aditya and Goh, Gabriel and Agarwal, Sandhini and Sastry, Girish and Askell, Amanda and Mishkin, Pamela and Clark, Jack and others},
  booktitle={International conference on machine learning},
  pages={8748--8763},
  year={2021},
  organization={PMLR}
}

@article{ma2023eureka,
  title={Eureka: Human-level reward design via coding large language models},
  author={Ma, Yecheng Jason and Liang, William and Wang, Guanzhi and Huang, De-An and Bastani, Osbert and Jayaraman, Dinesh and Zhu, Yuke and Fan, Linxi and Anandkumar, Anima},
  journal={arXiv preprint arXiv:2310.12931},
  year={2023}
}

@inproceedings{thomason2015learning,
  title={Learning to interpret natural language commands through human-robot dialog},
  author={Thomason, Jesse and Zhang, Shiqi and Mooney, Raymond J and Stone, Peter},
  booktitle={Twenty-Fourth International Joint Conference on Artificial Intelligence},
  year={2015}
}

@article{stepputtis2020language,
  title={Language-conditioned imitation learning for robot manipulation tasks},
  author={Stepputtis, Simon and Campbell, Joseph and Phielipp, Mariano and Lee, Stefan and Baral, Chitta and Ben Amor, Heni},
  journal={Advances in Neural Information Processing Systems},
  volume={33},
  pages={13139--13150},
  year={2020}
}

@article{ahn2022can,
  title={Do as i can, not as i say: Grounding language in robotic affordances},
  author={Ahn, Michael and Brohan, Anthony and Brown, Noah and Chebotar, Yevgen and Cortes, Omar and David, Byron and Finn, Chelsea and Fu, Chuyuan and Gopalakrishnan, Keerthana and Hausman, Karol and others},
  journal={arXiv preprint arXiv:2204.01691},
  year={2022}
}

@article{yu2023language,
  title={Language to rewards for robotic skill synthesis},
  author={Yu, Wenhao and Gileadi, Nimrod and Fu, Chuyuan and Kirmani, Sean and Lee, Kuang-Huei and Arenas, Montse Gonzalez and Chiang, Hao-Tien Lewis and Erez, Tom and Hasenclever, Leonard and Humplik, Jan and others},
  journal={arXiv preprint arXiv:2306.08647},
  year={2023}
}

@inproceedings{gcrl1993,
  title={Learning to Achieve Goals},
  author={Leslie Pack Kaelbling},
  booktitle={International Joint Conference on Artificial Intelligence},
  year={1993},
  url={https://api.semanticscholar.org/CorpusID:5538688}
}

@article{Caruana1997MultitaskL,
  title={Multitask Learning},
  author={Rich Caruana},
  journal={Machine Learning},
  year={1997},
  volume={28},
  pages={41-75},
  url={https://api.semanticscholar.org/CorpusID:45998148}
}

@inproceedings{
agarwal2023fpolicy,
title={f-Policy Gradients: A General Framework for Goal-Conditioned {RL} using f-Divergences},
author={Siddhant Agarwal and Ishan Durugkar and Peter Stone and Amy Zhang},
booktitle={Thirty-seventh Conference on Neural Information Processing Systems},
year={2023},
url={https://openreview.net/forum?id=EhhPtGsVAv}
}

@inproceedings{sikchi2023score,
  title={Score Models for Offline Goal-Conditioned Reinforcement Learning},
  author={Sikchi, Harshit and Chitnis, Rohan and Touati, Ahmed and Geramifard, Alborz and Zhang, Amy and Niekum, Scott},
  booktitle={The Twelfth International Conference on Learning Representations},
  year={2023}
}

@article{barreto2018successor,
  title={Successor features for transfer in reinforcement learning},
  author={Barreto, Andr{\'e} and Dabney, Will and Munos, R{\'e}mi and Hunt, Jonathan J and Schaul, Tom and van Hasselt, Hado P and Silver, David},
  journal={Advances in neural information processing systems},
  volume={30},
  year={2017}
}

@inproceedings{
eysenbach2022the,
title={The Information Geometry of Unsupervised Reinforcement Learning},
author={Benjamin Eysenbach and Ruslan Salakhutdinov and Sergey Levine},
booktitle={International Conference on Learning Representations},
year={2022},
url={https://openreview.net/forum?id=3wU2UX0voE}
}

@inproceedings{
park2024metra,
title={{METRA}: Scalable Unsupervised {RL} with Metric-Aware Abstraction},
author={Seohong Park and Oleh Rybkin and Sergey Levine},
booktitle={The Twelfth International Conference on Learning Representations},
year={2024},
url={https://openreview.net/forum?id=c5pwL0Soay}
}

@article{sarbin2004role,
  title={The Role of Imagination},
  author={Sarbin, Theodore R},
  journal={Narrative analysis: Studying the development of individuals in society},
  pages={5},
  year={2004},
  publisher={Sage}
}

@article{sarbin1970toward,
  title={Toward a theory of imagination},
  author={Sarbin, Theodore R and Juhasz, Joseph B},
  journal={Journal of personality},
  volume={38},
  number={1},
  pages={52--76},
  year={1970},
  publisher={Blackwell Publishing Ltd Oxford, UK}
}

@article{pylyshyn2002mental,
  title={Mental imagery: In search of a theory},
  author={Pylyshyn, Zenon W},
  journal={Behavioral and brain sciences},
  volume={25},
  number={2},
  pages={157--182},
  year={2002},
  publisher={Cambridge University Press}
}

@online{Alo_Moves,
  author       = {Alo Moves},
  title        = {How to do Downward Dog | Adho Mukha Svanasana Tutorial with Dylan Werner},
  year         = {2019},
  month        = {February},
  day          = {15},
  url          = {https://youtu.be/EC7RGJ975iM?si=GQWepAU1zZ_IQ3mc},
  note         = {YouTube video}
}

@online{Move_With_Nicole,
  author       = {Move With Nicole},
  title        = {35 MIN FULL BODY WORKOUT || Intermediate Pilates Flow},
  year         = {2021},
  url          = {https://youtu.be/mU0JDQItAkE?si=U51AZVvzZ8Uty8z5},
  note         = {YouTube video}
}

@online{LivestrongWoman,
  author       = {LivestrongWoman},
  title        = {Lying Leg Raises},
  year         = {2014},
  url          = {https://youtu.be/Wp4BlxcFTkE?si=Jdmjhu05kjm8pK1L},
  note         = {YouTube video}
}

@online{Well_Good,
  author       = {Well+Good},
  title        = {How To Do A Push-Up | The Right Way | Well+Good},
  year         = {2019},
  url          = {https://youtu.be/bt5b9x9N0KU?si=Ge6EXcMJ7_3Vf6ev},
  note         = {YouTube video}
}

@online{Jess_Yoga,
  author       = {Jess Yoga},
  title        = {Full Body Standing Yoga - Improve Flexibility and Balance},
  year         = {2022},
  url          = {https://youtu.be/vAA2RS4LQe0?si=Cwe5GsE8S5g7bwO4},
  note         = {YouTube video}
}

@online{Calisthenicmovement,
  author       = {Calisthenicmovement},
  title        = {The MOST Underrated Exercise You’re Not Doing!},
  year         = {2021},
  url          = {https://youtu.be/_3p4b4jVfFU?si=poU0t9XAhDf5U_g6},
  note         = {YouTube video}
}

@online{Daniela_Suarez,
  author       = {Daniela Suarez},
  title        = {GET YOUR SPLITS / Hip Flexibility | 28 DAY SPLITS CHALLENGE | 17 MIN | Daniela Suarez},
  year         = {2022},
  url          = {https://youtu.be/63bhMpFZnvQ?si=G0kEk0mr0U35lC86},
  note         = {YouTube video}
}

@online{crunches,
  author       = {wikiHow},
  title        = {How to Do a Cartwheel},
  year         = {2019},
  month        = {September},
  day          = {5},
  url          = {https://youtu.be/tkugpC33ZVI?si=b0T_tM_HEYB2xQtc},
  note         = {YouTube video}
}

@online{cartwheel,
  author       = {Well+Good},
  title        = {How To Do Crunches | The Right Way | Well+Good},
  year         = {2019},
  month        = {July},
  day          = {3},
  url          = {https://youtu.be/0t4t3IpiEao?si=UBOnxa_YtcD3p-Ct},
  note         = {YouTube video}
}

@article{sontakke2024roboclip,
  title={Roboclip: One demonstration is enough to learn robot policies},
  author={Sontakke, Sumedh and Zhang, Jesse and Arnold, S{\'e}b and Pertsch, Karl and B{\i}y{\i}k, Erdem and Sadigh, Dorsa and Finn, Chelsea and Itti, Laurent},
  journal={Advances in Neural Information Processing Systems},
  volume={36},
  year={2024}
}

@inproceedings{tirinzoni2024zero,
  title={Zero-shot Whole-Body Humanoid Control via Behavioral Foundation Models},
  author={Tirinzoni, Andrea and Touati, Ahmed and Farebrother, Jesse and Guzek, Mateusz and Kanervisto, Anssi and Xu, Yingchen and Lazaric, Alessandro and Pirotta, Matteo},
  booktitle={NeurIPS 2024 Workshop on Open-World Agents}
}

@incollection{loper2023smpl,
  title={SMPL: A skinned multi-person linear model},
  author={Loper, Matthew and Mahmood, Naureen and Romero, Javier and Pons-Moll, Gerard and Black, Michael J},
  booktitle={Seminal Graphics Papers: Pushing the Boundaries, Volume 2},
  pages={851--866},
  year={2023}
}

@inproceedings{boothperils2023,
author = {Booth, Serena and Knox, W. Bradley and Shah, Julie and Niekum, Scott and Stone, Peter and Allievi, Alessandro},
title = {The perils of trial-and-error reward design: misdesign through overfitting and invalid task specifications},
year = {2023},
isbn = {978-1-57735-880-0},
publisher = {AAAI Press},
url = {https://doi.org/10.1609/aaai.v37i5.25733},
doi = {10.1609/aaai.v37i5.25733},
abstract = {In reinforcement learning (RL), a reward function that aligns exactly with a task's true performance metric is often sparse. For example, a true task metric might encode a reward of 1 upon success and 0 otherwise. These sparse task metrics can be hard to learn from, so in practice they are often replaced with alternative dense reward functions. These dense reward functions are typically designed by experts through an ad hoc process of trial and error. In this process, experts manually search for a reward function that improves performance with respect to the task metric while also enabling an RL algorithm to learn faster. One question this process raises is whether the same reward function is optimal for all algorithms, or, put differently, whether the reward function can be overfit to a particular algorithm. In this paper, we study the consequences of this wide yet unexamined practice of trial-and-error reward design. We first conduct computational experiments that confirm that reward functions can be overfit to learning algorithms and their hyperparameters. To broadly examine ad hoc reward design, we also conduct a controlled observation study which emulates expert practitioners' typical reward design experiences. Here, we similarly find evidence of reward function overfitting. We also find that experts' typical approach to reward design—of adopting a myopic strategy and weighing the relative goodness of each state-action pair—leads to misdesign through invalid task specifications, since RL algorithms use cumulative reward rather than rewards for individual state-action pairs as an optimization target.},
booktitle = {Proceedings of the Thirty-Seventh AAAI Conference on Artificial Intelligence and Thirty-Fifth Conference on Innovative Applications of Artificial Intelligence and Thirteenth Symposium on Educational Advances in Artificial Intelligence},
articleno = {664},
numpages = {10},
series = {AAAI'23/IAAI'23/EAAI'23}
}

@misc{chanesane2023learningvideoconditionedpoliciesunseen,
      title={Learning Video-Conditioned Policies for Unseen Manipulation Tasks}, 
      author={Elliot Chane-Sane and Cordelia Schmid and Ivan Laptev},
      year={2023},
      eprint={2305.06289},
      archivePrefix={arXiv},
      primaryClass={cs.RO},
      url={https://arxiv.org/abs/2305.06289}, 
}

@misc{jain2024vid2robot,
        title={Vid2Robot: End-to-end Video-conditioned Policy Learning with Cross-Attention Transformers}, 
        author={Vidhi Jain and Maria Attarian and Nikhil J Joshi and Ayzaan Wahid and Danny Driess and Quan Vuong and Pannag R Sanketi and Pierre Sermanet and Stefan Welker and Christine Chan and Igor Gilitschenski and Yonatan Bisk and Debidatta Dwibedi},
        year={2024},
        eprint={2403.12943},
        archivePrefix={arXiv},
        primaryClass={cs.RO}
  }

@inproceedings{
luo2025learning,
title={Learning Video-Conditioned Policy on Unlabelled Data with Joint Embedding Predictive Transformer},
author={Hao Luo and Zongqing Lu},
booktitle={The Thirteenth International Conference on Learning Representations},
year={2025},
url={https://openreview.net/forum?id=TqM0hifngW}
}

@article{du2023video,
  title={Video Language Planning},
  author={Du, Yilun and Yang, Mengjiao and Florence, Pete and Xia, Fei and Wahid, Ayzaan and Ichter, Brian and Sermanet, Pierre and Yu, Tianhe and Abbeel, Pieter and Tenenbaum, Joshua B and others},
  journal={arXiv preprint arXiv:2310.10625},
  year={2023}
}

@misc{du2023learninguniversalpoliciestextguided,
      title={Learning Universal Policies via Text-Guided Video Generation}, 
      author={Yilun Du and Mengjiao Yang and Bo Dai and Hanjun Dai and Ofir Nachum and Joshua B. Tenenbaum and Dale Schuurmans and Pieter Abbeel},
      year={2023},
      eprint={2302.00111},
      archivePrefix={arXiv},
      primaryClass={cs.AI},
      url={https://arxiv.org/abs/2302.00111}, 
}

@misc{xie2024text2rewardrewardshapinglanguage,
      title={Text2Reward: Reward Shaping with Language Models for Reinforcement Learning}, 
      author={Tianbao Xie and Siheng Zhao and Chen Henry Wu and Yitao Liu and Qian Luo and Victor Zhong and Yanchao Yang and Tao Yu},
      year={2024},
      eprint={2309.11489},
      archivePrefix={arXiv},
      primaryClass={cs.LG},
      url={https://arxiv.org/abs/2309.11489}, 
}

@inproceedings{
luo2025solving,
title={Solving New Tasks by Adapting Internet Video Knowledge},
author={Calvin Luo and Zilai Zeng and Yilun Du and Chen Sun},
booktitle={The Thirteenth International Conference on Learning Representations},
year={2025},
url={https://openreview.net/forum?id=p01BR4njlY}
}

@article{huang2022language,
      title={Language Models as Zero-Shot Planners: Extracting Actionable Knowledge for Embodied Agents},
      author={Huang, Wenlong and Abbeel, Pieter and Pathak, Deepak and Mordatch, Igor},
      journal={arXiv preprint arXiv:2201.07207},
      year={2022}
    }

@misc{wang2024describeexplainplanselect,
      title={Describe, Explain, Plan and Select: Interactive Planning with Large Language Models Enables Open-World Multi-Task Agents}, 
      author={Zihao Wang and Shaofei Cai and Guanzhou Chen and Anji Liu and Xiaojian Ma and Yitao Liang},
      year={2024},
      eprint={2302.01560},
      archivePrefix={arXiv},
      primaryClass={cs.AI},
      url={https://arxiv.org/abs/2302.01560}, 
}

@inproceedings{codeaspolicies2022,
    title={Code as Policies: Language Model Programs for Embodied Control},
    author={Jacky Liang and Wenlong Huang and Fei Xia and Peng Xu and Karol Hausman and Brian Ichter and Pete Florence and Andy Zeng},
    booktitle={arXiv preprint arXiv:2209.07753},
    year={2022}
}
\bibliographystyle{abbrvnat}

\appendix

\newpage
\appendix

\section*{Appendix}


\section{Proof For Theorem 1}
\imitation*

\begin{proof}
Let $\rho$ be the density of the offline dataset, $\rho^\pi$ be the visitation distribution w.r.t. policy $\pi$ and $\rho^E$ be the expert density. The distribution matching objective mentioned in Equation \ref{eq:dm} using KL divergence is given as:
\begin{equation}
    \min_{\rho^\pi} D_{KL} (\rho^\pi || \rho^E)
\end{equation}
With simple algebraic manipulation, the divergence can be simplified to, 
\begin{align}
 D_{KL} (\rho^\pi || \rho^E) 
    =& \mathbb{E}_{\rho^\pi} \big[ \log{\frac{\rho}{\rho^E}}\big] + \mathbb{E}_{\rho^\pi} \big[ \log{\frac{\rho^\pi}{\rho}}\big] \\
    =&  \mathbb{E}_{\rho^\pi} \big[ \log{\frac{\rho(s)}{\rho^E(s)}}\big] + D_{KL} (\rho^\pi(s) || \rho(s)) \\
    =& - J(\pi, \log{\frac{\rho^E}{\rho}})+ D_{KL} (\rho^\pi(s) || \rho(s)) \\
    \le & - J(\pi, \log{\frac{\rho^E}{\rho}})+ D_{KL} (\rho^\pi(s,a) || \rho(s,a)) 
\end{align}
The last line follows from the fact that $D_{KL} (\rho^\pi(s) || \rho(s))\le D_{KL} (\rho^\pi(s,a) || \rho(s,a)) $.
\begin{align}
    D_{KL} (\rho^\pi(s,a) || \rho(s,a)) &= \E_{\rho^\pi(s,a)}[\log \frac{\rho^\pi(s,a)}{\rho(s,a)}]\\
    & = \E_{\rho^\pi(s,a)}[\log \frac{\rho^\pi(s)\pi(a|s)}{\rho(s)\pi^D(a|s)}]\\
    & = \E_{\rho^\pi(s,a)}[\log \frac{\rho^\pi(s)}{\rho(s)}] + \E_{\rho^\pi(s,a)}[\log \frac{\pi(a|s)}{\pi^D(a|s)}] \\
    &= \E_{\rho^\pi(s)}[\log \frac{\rho^\pi(s)}{\rho(s)}] + \E_{\rho^\pi(s,a)}[\log \frac{\pi(a|s)}{\pi^D(a|s)}] \\
    &= D_{KL} (\rho^\pi(s) || \rho(s)) + \E_{s\sim\rho^\pi}[D_{KL} (\pi(a|s) || \pi^D(a|s)) ]\\
    &\ge D_{KL} (\rho^\pi(s) || \rho(s))
\end{align}

Rewriting the minimization of the upper bound of KL as a maximization problem by reversing signs, we get:
\begin{align}
    \max_\pi \left[ J(\pi, \log{\frac{\rho^E}{\rho}})- D_{KL} (\rho^\pi(s,a) || \rho(s,a)) \right]
\end{align}

The first term is an RL objective with a reward function given by $\log(\frac{\rho^E}{\rho})$, and the second term is an offline regularization to constrain the behaviors of offline datasets. Following prior works~\cite{demodice,ma2022smodice}, since our BFM is trained on an offline dataset and limited to output skills in support of dataset actions, and we can ignore the regularization to infer the latent $z$ parameterizing the skill. A heuristic yet performant alternative is to use a shaped reward function of $\frac{\rho^E}{\rho}$, which allows us to avoid training the discriminator completely and was shown to lead to performant imitation in~\cite{pirotta2023fast}.
\end{proof}

\section{Experimental Details}

\subsection{Environments}

\subsubsection{DM-control environments}

We use continuous control environments from the DeepMind Control Suite \citep{tassa2018deepmind}. 

\textbf{Walker: } The agent has a 24 dimensional state space consisting of joint positions and velocities and 6 dimensional action space where each dimension of action lies in $[-1, 1]$. The system represents a planar walker. 

\textbf{Cheetah: } The agent has a 17 dimensional state space consisting of joint positions and velocities and 6 dimensional action space where each dimension of action lies in $[-1, 1]$. The system represents a planar biped ``cheetah". 

\textbf{Quadruped: } The agent has a 78 dimensional state space consisting of joint positions and velocities and 12 dimensional action space where each dimension of action lies in $[-1, 1]$. The system represents a 3-dimensional ant with 4 legs. 

\textbf{Stickman: } Stickman was recently introduced as a task that bears resemblance to a humanoid in~\cite{mazzaglia2024multimodal}. It has a 44 dimensional observation space and a 10 dimensional action space where each dimension of action lies in $[-1, 1]$.

\textbf{SMPL 3D Humanoid: }  The agent has a 358 dimensional state space consisting of joint positions and velocities and 69 dimensional action space where each dimension of action lies in $[-1, 1]$. The system represents a 3-dimensional humanoid. 

For all the environments we consider image observations of size 64 x 64. All DM Control tasks have an episode length of 1000.


\subsection{Evaluation Protocol}
\label{ap:evaluation_protocol}
To evaluate models for behavior generation through language prompts, we considered a set of 4 prompts per environment. One key consideration in designing these prompts was the generative video model's capability of generating reasonable imagined trajectories. Due to computing limitations, we were restricted to using a fairly small video embedding (~1 billion parameters) and generation model (~43 million parameters). The interpretability of our framework allows us to declare failures before they happen by looking at the generations for imagined trajectories. 

For the set of task prompts specified by language, there is no ground truth reward function and there does not exist a reliable quantitative metric to verify which of the methods perform better. Instead, since humans communicate their intents via language, humans are the best judge of whether the agent has demonstrated the behavior they intended to convey. In this work we use a Multimodal LLM as a judge, following studies by prior works demonstrating the correlation of LLMs judgment to humans~\citep{chen2024mllm}. We use GPT-4o model as the judge, where the GPT-4o model is provided with two videos, one generated by a base method, and another generated by one of the methods we consider, and asked for preference between which video is better explained by the text prompt for the task. When inputting the videos to the judge, we randomize the order of the baseline and proposed methods to reduce the effect of anchoring bias. The prompt we use to compare the two methods is given here:

For prompt to policies:
\lstinputlisting{psuedocode/prompt.py}

For cross-embodiment video to policies:

\lstinputlisting{psuedocode/cross.py}




\subsection{Dataset Collection for Zero-Shot RL}
\label{ap:dataset}
For Cheetah, Walker, Quadruped, and Stickman environments, our data is collected following a pure exploration algorithm with no extrinsic rewards. In this work, we use intrinsic rewards obtained from Random Network Distillation~\citep{burda2018exploration} to collect our dataset based on the protocol by ExoRL~\citep{yarats2022don} and using the implementation from  repository~\href{https://github.com/denisyarats/exorl?tab=readme-ov-file}{ExoRL repository}. For Cheetah, Walker, and Quadruped, our dataset comprises 5000 episodes and equivalently 5 million transitions, and for Stickman, our dataset comprises 10000 episodes or equivalently 10 million transitions. Due to the high dimensionality of action space in Stickman, RND does not discover a lot of meaningful behaviors; hence we additionally augment the dataset with 1000 episodes from the replay buffer of training for a `running' reward function and 1000 episodes of replay buffer trained on a `standing' reward function. 


\subsection{Baselines}
Zero-shot text to policy behavior has not been widely explored in RL literature. However, Offline RL using language-based rewards utilizes an offline dataset to learn policies and is thus zero-shot in terms of rolling out the learned policy. This makes it a meaningful baseline to compare against. Offline RL uses the same MDP formulation as described in Section~\ref{sec:prelims} to learn a policy $\pi: \mathcal S \rightarrow \Delta(\mathcal A)$, given a reward function $r: \mathcal S \rightarrow \mathbbm R$ and offline dataset $\mathcal D$. The offline dataset consists of state, action, next-state, reward transitions $(s, a, s', r(s))$. One of the core challenges of Offline RL is to learn a Q-function that does not overestimate the reward of unseen actions, which then at evaluation causes the agent to drift from the support of the offline dataset $\mathcal D$.

We implement two offline RL baselines to compare with RLZero-- Implicit Q-learning (IQL, \cite{kostrikov2021offline}) and Offline TD3 (TD3, \cite{fujimoto2021minimalist}). Both of these methods share the same offline dataset as used to learn the successor measure in RLZero, which is described in Section~\ref{sec:experiments}, and gathered using RND. Since these datasets are reward-free, we must still construct a reward function that provides meaningful rewards for an agent achieving the behavior that aligns with the text prompt. Formally, given language instruction $e^l \in \mathcal E^l$, frame stack $(o_{t-k}, o_{t-k+1},\hdots,o_t) \in \mathcal I$, and embedding VLM $\phi: \mathcal E \rightarrow \mathcal Z$, which can also embed frame stacks $\phi: \mathcal I \rightarrow \mathcal Z$ (and where observations $o_i \in \mathcal I$, and we use $o_i$ for this section), the reward for a corresponding language instruction and frame stack $k$ is the cosine similarity between the stacked language embedding and the frame embedding:
\begin{equation}
r(o_{t-k:t}, e^l) = \frac{\phi(e^l)\cdot \phi(o_{t-k:t})}{\|\phi(e^l)\|\|\phi(o_{t-k:t})\|}
\end{equation}
For any individual task, $e^l$ is fixed and this is a reward function dependent on observations (as represented by a frame stack $o_{t-k:t}$). Notice that this representation closely matches that in Equation~\ref{eq:embed-align}, but instead of finding the optimal sequence of observations, we simply compute reward as the cosine similarity between language and frames. Since the strength of the embedding space is vital to the quality of the reward function for offline RL, we evaluate two different vision-language models:

\textbf{Image-language reward} (SigLIP~\cite{zhai2023sigmoid}): take a stack of 3 frames encode them using SigLIP, then the reward is computed as the cosine distance of the embeddings and the SigLIP embedding of language.

\textbf{Video-language reward} 
 (InternVideo2~\cite{wang2024internvideo2scalingfoundationmodels}): this method takes in previous frames $o_{0:t-1}$ as context and uses it to generate an embedding of the current frame observation $o_t$. The video encoder then takes the cosine similarity of $\phi(o_{0:t})$ and $\phi(e^l)$. This allows the reward function to provide rewards based not only on reaching certain states, but the agent exhibiting temporally extended behaviors that match the behavior. In practice, providing rewards using an image-based encoder for frame stacks can be challenging for tasks such as walking because they require context, and video-based rewards offer a way to better encode the temporal context.

\subsubsection{Offline RL}
\textbf{Implicit Q-learning~\citep{kostrikov2021offline}}
Implicit Q-learning builds on the classic TD error (revised in our context of language-instruction rewards):
$$L(\theta) =E_{(s,a,s',a')\sim \mathcal D}[(r(s, e^l) + \gamma Q_{\hat \theta}(s',a') - Q_\theta(s,a))^2]$$ 
to learn a Q function $Q_\theta$. IQL builds on this loss to handle the challenge of ensuring that the Q-values do not speculate on out-of-distribution actions while also ensuring that the policy is able to exceed the performance of the behavior policy. Exceeding the behavior policy is important because the dataset is collected using RND, meaning that any particular trajectory from the dataset is unlikely to perform well on a language reward. The balance of performance is achieved by optimizing the objective with expectile regression:
$$
L_2^\tau(u) = |\tau - \mathbbm{1}(u < 0)|u^2
$$
Where $\tau > 0.5$ is the selected expectile. Expectile regression gives greater weight to the upper expectiles of a distribution, which means that the Q function will focus more on the upper values of the Q function. 

Rather than optimize the objective with $Q(s',a')$ directly, IQL uses a value function to reduce variance to give the following objectives:
$$
 L_V(\psi)=E_{(s,a)\sim D}[L_2^\tau(Q_\theta(s,a) V_\psi(s))]$$
 $$
 L_Q(\theta) =E_{(s,a,s',a')\sim \mathcal D}[L_2^\tau(r(s, e^l) + V_\psi(s') - Q_\theta(s,a))]$$ 
Using the Q-function, a policy can be extracted using advantage weighted regression:
$$
 L (\phi)=E_{(s,a) \sim \mathcal D}[\text{exp} (\beta(Q_\theta(s,a) - V_\psi (s)))\log\pi_\phi(a|s)].
$$
Where $\beta$ is the inverse temperature for the advantage term. 

\textbf{TD3~\citep{fujimoto2018addressing}}:

TD3 was demonstrated to be the best performing algorithm when learning from exploratory RND datasets in~\citep{yarats2022don}. While TD3 does not explicitly address the challenges discussed in implicit Q-learning and learns using Bellman Optimality backups, the approach is simple and works well in practice. The algorithm uses a deterministic policy extraction $\pi: \mathcal S \rightarrow \mathcal A$ to give the following objective:
$$
\pi = \argmax_\pi E_{(s,a) \sim \mathcal D}[ Q(s,\pi(s))]
$$

\subsection{RLZero}

\begin{algorithm}[H]
    \small
    \caption{\rlzero}
    \label{alg:ALG1}
    \begin{algorithmic}[1]
    \STATE Init: Pretrained Video Generation Model $\VM$, Pretrained BFM $\pi_z$, Offline Exploration Dataset $d^O$
    \STATE Given: text prompt $t$
        \STATE Generate imagination video given the text prompt:
        $\{i_1,i_2,..i_l\} = \VM(t)$
        \STATE Project the imagined frames to real observations using embedding similarity as in Eq~\ref{eq:embed-align}.\STATE Use Theorem ~\ref{lemma:closed_form_imitation} for zero-shot inference to obtain $\text{BFM}(\{s_1,s_2,...,s_l\})=z_{imit}$ and return $\pi_{z_{imit}}$.
    \end{algorithmic}
\end{algorithm}

\subsubsection{Text to Imagined Behavior with Video models}
To generate a proposed video frame sequence, we utilize the GenRL architecture and provide the workflow using equations from the original paper \citep{mazzaglia2024multimodal}. First, the desired text prompt is embedded with the underlying video foundation model InternVideo2 \citep{wang2024internvideo2scalingfoundationmodels} $e^{(l)}=f_{PT}^{(l)}(y)$. These embeddings are then repeated $n_{frames}$ times (we use $n_{frames}=32$) to match the temporal structure expected by the world model. The repeated text embeddings are passed through an aligner module $e(v)=f_{\psi}(e^{(l)})$. The aligner is implemented as a UNet and it is used to address the multimodality gap \citep{liang2022mind} when embeddings from different modalities occupy distinct regions in the latent space. Next, the aligned video embeddings are concatenated with temporal embeddings. The temporal embeddings are one-hot encodings of the time step modulo $n_{frames}$ providing frame-level positional information. The first embedding is passed to the world model connector $p_{\psi}(s_t | e)$ to initialize the latent state. For each subsequent time step, the sequence model $h_t=f_{\phi}(s_{t-1},a_{t-1},h_{t-1})$ (implemented as a GRU) updates the deterministic state $h_t$. The deterministic state $h_t$ is mapped to a stochastic latent state ($s_t$) using the dynamics predictor $p_{\phi}(s_t | h_t)$. The dynamics predictor, implemented as an ensemble of MLPs, predicts the sufficient statistics (mean and standard deviation) for a Normal distribution over $s_t$.
During inference, the mean of this distribution is used as the latent state. Finally, the latent state $s_t$ is passed to a convolutional decoder $p_{\phi}(x_t | s_t)$ to reconstruct the video frame $x_t$. This process is repeated for all time steps $(t = 1, ... , n_{frames})$.

\subsubsection{Grounding imagined observations to observations in offline dataset}
As described in Section~\ref{sec:rlzero}, we ground imagined sequences by retrieving real offline states based on similarity in an embedding space. This enables us to create a suitable $z$-vector for distribution matching which is the expected value of the state features under the distribution of imagined states ($\rho_{imagined})$.
During our dataset collection phase, we save both the agent’s proprioceptive state as well as the corresponding rendered images and search over the images to then find the corresponding state. Our code supports both stacked-frame embeddings and single-frame embeddings. We find that stacked-frame embeddings were helpful in modeling temporal dependencies through velocity and acceleration, which are crucial for recreating the intended behavior. SigLIP \citep{zhai2023sigmoid}, which replaces CLIP’s \citep{Radford2021LearningTV} softmax-based contrastive loss with a pairwise sigmoid loss, resulted in qualitatively better matches to exact positions within sequences, imitating behavior more accurately than CLIP. For both models, we use the OpenCLIP \citep{ilharco_gabriel_2021_5143773} framework. Our matching process first involves precomputing embeddings offline, which are stored in chunks of up to 100,000 frames to optimize memory usage and retrieval speed. During inference, we load this file and embed the query frame sequence from GenRL \citep{mazzaglia2024multimodal} into the same latent space. We process these query embeddings by dividing them into chunks of $k$-frame sequences ($k$ is generally 3 or 5), where each sequence consists of the current frame and the $k-1$ preceding frames. If there are not enough preceding frames, we repeat the first frame to fill the gap. For each chunk of saved embeddings, we compute dot products between the query chunk and all subsequences of size $k$ in the saved embeddings. We track the highest similarity score for each query chunk and return the frames corresponding to the closest embedding sequences.

\subsubsection{Training a zero-shot RL agent}

In this work, we chose Forward-Backward (FB)~\citep{fballreward} as our zero-shot RL algorithm and trained it on proprioceptive inputs. Our implementation follows closely from the author's codebase \href{}{}. Specifically, FB trains Forward, Backward, and Actor networks. The backward networks are used to map a demonstration or a reward function to a skill, which is then used to learn a latent-conditional Actor. Our experiments were performed on NVIDIA-A40 and AMD EPYC 7763 64-Core Processor machine. The hyperparameters for our FB implementation are listed below:

\textbf{Implementation:} We build upon the codebase for FB \url{https://github.com/facebookresearch/controllable_agent} and implement all the algorithms under a uniform setup for network architectures and the same hyperparameters for shared modules across the algorithms. We keep the same method-agnostic hyperparameters and use the author-suggested method-specific hyperparameters. The hyperparameters for all methods can be found in Table~\ref{table:hyp}:

\begin{table}[h!]
\centering
\caption{Hyperparameters for zero-shot RL with FB.}
\label{table:hyp}
\begin{tabular}{l c}
\hline
\textbf{Hyperparameter} & \textbf{Value} \\
\hline
Replay buffer size & $5 \times 10^6$, $10 \times 10^6$ (for stickman)\\
Representation dimension & 128 \\
Batch size & 1024 \\
Discount factor $\gamma$ & 0.98  \\
Optimizer & Adam \\
Learning rate & $3\times 10^{-4}$ \\
Momentum coefficient for target networks & 0.99 \\
Stddev $\sigma$ for policy smoothing & 0.2 \\
Truncation level for policy smoothing & 0.3 \\
Number of gradient steps & $2\times 10^6$ \\
Regularization weight for orthonormality loss (ensures diversity) & 1 \\
\textbf{FB specific hyperparameters} & \\
Hidden units ($F$) & 1024\\
Number of layers ($F$) & 3\\
Hidden units ($b$) & 256\\
Number of layers ($b$) & 2\\
\hline
\end{tabular}
\end{table}
\subsection{Imagination-Free \rlzero}
\label{ap:imagination_free}
\begin{figure}
    \centering
    \includegraphics[width=0.85\linewidth]{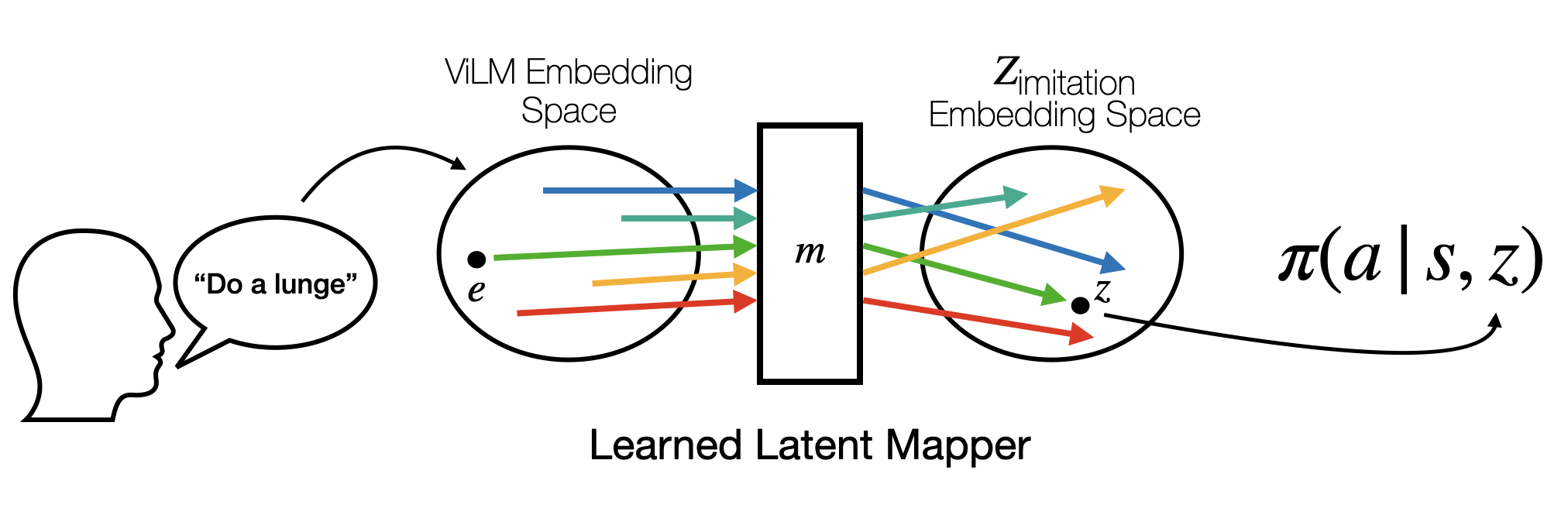}
    \caption{Illustrative diagram of imagination-free \rlzero inference}
    \label{fig:imagination_free}
\end{figure}
In this section, we propose an alternate method (Figure~\ref{fig:imagination_free}) for mapping a task description into a usable policy. Instead of first embedding a text prompt $e^\ell$, generating a video, then mapping the video to a policy parametrization, we propose to map the text prompt directly to a policy parametrization. To do this, we learn a latent mapper $m : e \rightarrow z_\text{imitation}$ that relates the latent space of a ViLM to the latent space of our policy parametrization. The mapper is a 3 layer MLP with hidden size of 512.
\\ \\
\textbf{Pretraining}: We first generate a dataset of episodes containing diverse behaviors by rolling out the behavior foundation model conditioned on a uniformly random sampled $z$. The resulting image observation sequences are then down-sampled (by 8) and sliced to break up each episode into smaller chunks of length 8; this preprocessing step helps increase the behavioral diversity and improves the ability of the ViFM to capture semantic meaning. The resulting clips are then embedded using a ViFM (InternVideo2~\citep{wang2024internvideo2scalingfoundationmodels}) where each embedding is denoted by $e$ (as in Section \ref{exp:ablation}). Now, we have a set of sequences of length 8 consisting of image observation along with their proprioceptive states, and the embedding for the image sequence. 

Now, an obvious option is to map the embedding of image sequence to the $z$ that generated the trajectory. Unfortunately, the way BFMs are trained, they do not account for optimal policy invariance to reward functions. That is multiple reward functions that induce the same optimal policy are mapped to different encodings in the $\mathcal{Z}$-space. This presents a problem for the latent mapper, as it becomes a one-to-many mapping for any language encoding. We present an alternative solution which ensures that only one target $z$ is used for a given distribution of states induced by a language encoding. To achieve this we turn back to the imitation learning objective where the sequence of proprioceptive states is used to obtain a policy representation using Lemma~\ref{lemma:closed_form_imitation} which gives the latent $z$ corresponding to the policy that minimizes the distribution divergence to the sequence of given states. We refer to the policy representation embedding space from the Forward-Backward representation as $\mathcal{Z}_\text{imitation}$-space.


When optimizing the latent mapper $m$, we minimize the following loss:

$$\mathcal{L}(\mathcal{D}, m) = \mathbb{E}_{(z_\text{imitation}, e)\sim\mathcal{D}}\left[-\frac{ m(e)\cdot z_\text{imitation} }{\|m(e)\| \cdot \| z_\text{imitation} \| }\right]$$

The latent space of the Backward representation is aligned with the latent space of the policy parametrization, so learning a mapping from the ViFM space to the Backward space is equivalent to learning a mapping from the ViFM space to the policy parametrization space.

\textbf{Inference: } During inference, the language prompt is embedded to a latent vector $e^l$. A known issue with multimodal embedding models is the embedding gap~\citep{liang2022mind}, which makes the video embeddings unaligned with text embeddings. To account for this gap, we use an aligner trained in an unsupervised fashion from previous work~\cite{mazzaglia2024multimodal} to align the language embedding ($e^l_{aligned}$). Then the aligned embedding is passed through the latent mapper to get the policy conditioning $z_{imitation}$ which gives us the policy that achieves the desired behavior specified through language.

\section{Additional Results}

\newcolumntype{L}{>{\raggedright\arraybackslash}X} 
\newcolumntype{C}{>{\centering\arraybackslash}X}   

\subsection{Visualizations}
\begin{figure}[h]
    \centering
    \includegraphics[width=0.9\linewidth]{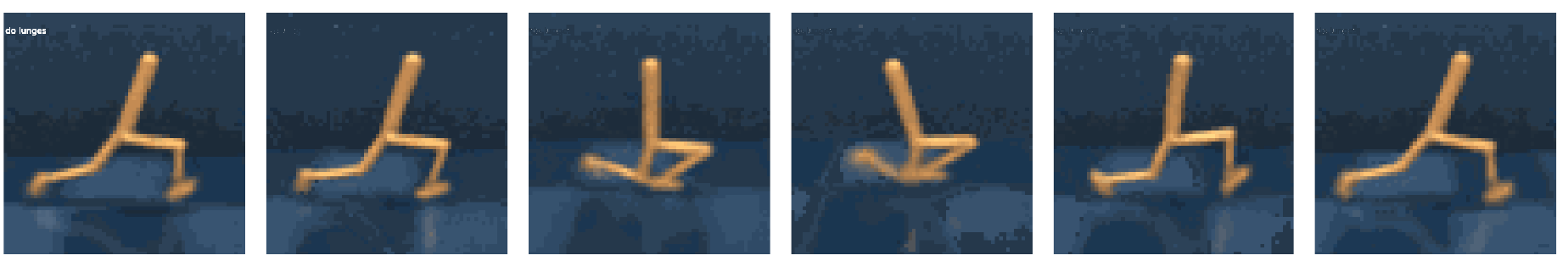}
    \caption{\textbf{Example Imagined Trajectories}: The video model imagines frames conditioned on the task specified as a text prompt `do lunges'.}
    \label{fig:imagine}
    \vspace{-0.2cm}
\end{figure}

\begin{figure}[h]
    \centering
    \includegraphics[width=0.7\linewidth]{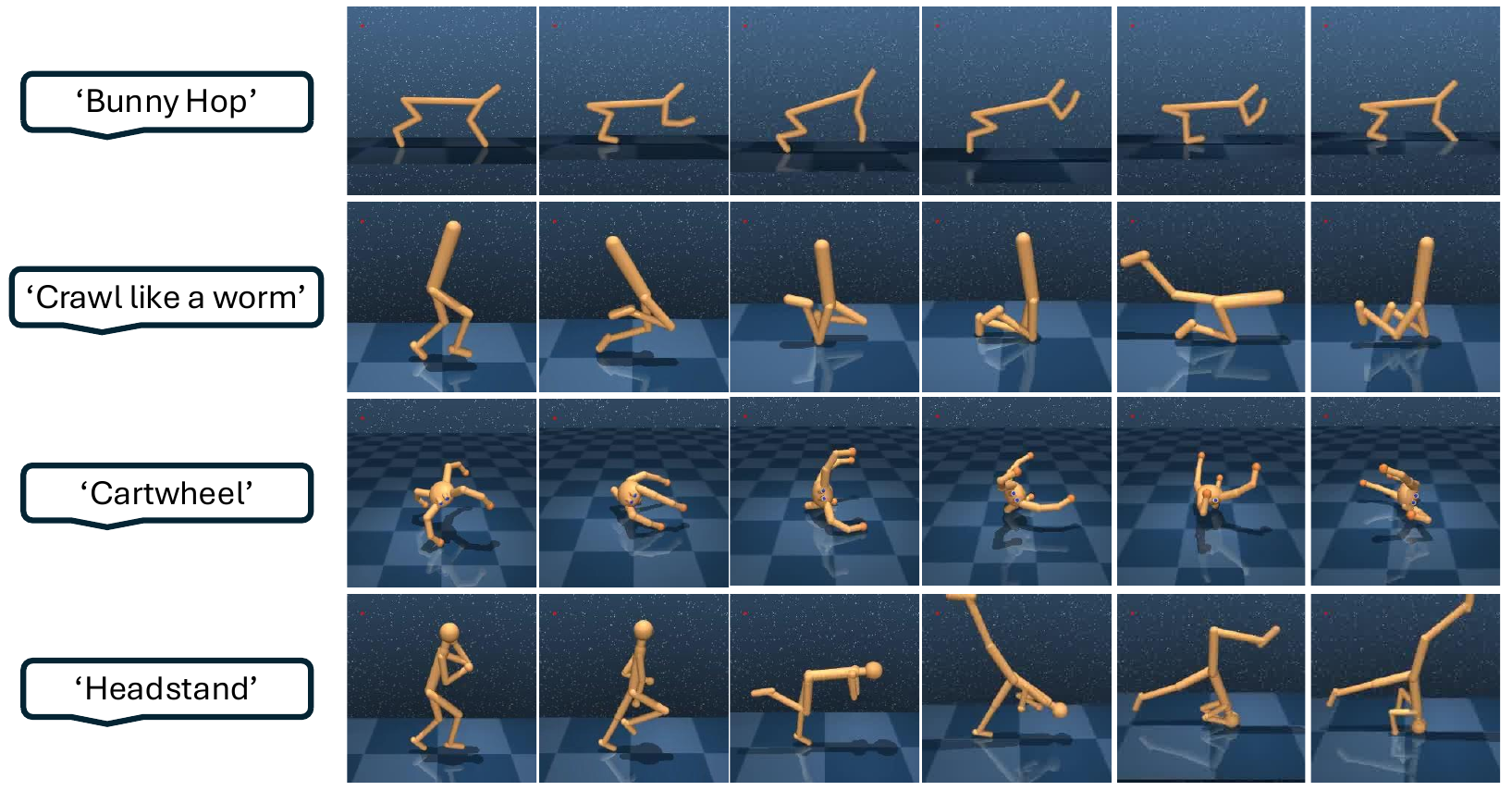}
    \caption{\rlzero in action: Qualitative examples of RL converting the given language prompts into behaviors across different domains. Top to bottom: Cheetah, Walker, Quadruped, Stickman.}
    \label{fig:rlzero_qualitative}
\end{figure}

\subsection{Zero-shot imitation: Discriminator vs Discriminator-free}
\label{ap:discriminator}

We experiment whether optimizing a tighter bound to KL divergence at the expense of training an additional discriminator in Lemma~\ref{lemma:closed_form_imitation} leads to performance improvements. Table~\ref{tab:rl_zero_discriminator} shows that using a discriminator does not lead to a performance improvement and a training-free inference time solution achieves a slightly higher win rate.

\begin{table}[t]
\begin{tabularx}{\textwidth}{L|C|C}
\toprule
& RLZero with discriminator & RLZero \\ 
\midrule
\textbf{Walker} &  & \\
Lying Down  & \highlight{5/5} & \highlight{5/5} \\ 
Walk like a human  & \highlight{5/5}& \highlight{5/5} \\ 
Run like a human  & \highlight{5/5} & \highlight{5/5} \\ 
Do lunges  &  \highlight{5/5}  & \highlight{5/5} \\ 
Cartwheel &\highlight{5/5} & {4/5} \\ 
Strut like a horse & \highlight{5/5}& \highlight{5/5} 
\\ 
Crawl like a worm & 1/5 &  \highlight{3/5} \\ \hline 
\textbf{Quadruped}&  &  \\
Cartwheel & \highlight{4/5} & \highlight{4/5} \\ 
Dance &  \highlight{5/5} &  \highlight{5/5} \\ 
Walk using three legs & 4/5 & \highlight{5/5} \\ 
Balancing on two legs & 4/5&  \highlight{5/5} \\ 
Lie still & \highlight{2/5} &  \highlight{2/5} \\ 
Handstand & \highlight{4/5} &  3/5 \\ 
\hline
\textbf{Cheetah} &  &  \\
Lie down & 1/5 & \highlight{2/5} \\
Bunny hop & \highlight{5/5}& \highlight{5/5}  \\ 
Jump high &\highlight{5/5} & \highlight{5/5} \\ 
Jump on back legs and backflip & \highlight{5/5} & \highlight{5/5} \\ 
Quadruped walk & {2/5}  &\highlight{4/5}  \\ 
Stand in place like a dog &\highlight{4/5} &3/5  \\ 
\hline 
\textbf{Stickman}&  &  \\
Lie down stable &  \highlight{5/5} &   {4/5} \\ 
Lunges & \highlight {5/5}&   \highlight {5/5}\\ 
Praying &  \highlight{4/5}  & \highlight{4/5} \\ 
Headstand &  \highlight {5/5} &  {4/5}\\ 
Punch & \highlight{4/5} &  \highlight{4/5} \\ 
Plank &\highlight{4/5} & {3/5} \\ 
\midrule
Average &82.4\%&\highlight{83.2$\%$}\\
\bottomrule 
\end{tabularx}
\caption{Win rates computed by GPT-4o of policies trained by different methods when compared to base policies trained by TD3+Image-language reward.}
\vspace{-5mm}
\label{tab:rl_zero_discriminator}
\end{table}

\subsection{Cross Embodiment experiments}
\label{ap:cross_embodiment}
Tables ~\ref{tab:win rates_cross_embodiment_stickman_app} and ~\ref{tab:win rates_cross_embodiment_smpl_app} describe the videos used for cross-embodiment along with the win rate of the behaviors generated by \rlzero when compared to a base model which trains SMODICE~\citep{ma2022smodice} on the nearest states found with the same grounding methods as \rlzero.

\begin{table}[h]
    \centering
    \resizebox{\textwidth}{!}{%
    \begin{NiceTabular}{c|c|c|c}
        \hline
        & \textbf{Prompt Descriptions} & \textbf{Video Link}/\hyperlink{https://www.meta.ai/}{\textbf{Meta AI Prompt}} & \textbf{Win rate vs SMODICE} \\
        \hline
        \multirow{10}{*}{\rotatebox{90}{\textbf{Stickman (2D Humanoid)}}} 
        & Human in backflip position & animated human trying backflip & 2/5 \\
        & Downward facing dog yoga pose & right profile of yoga pose downward facing dog & 1/5\\
        & Cow yoga pose & \cite{Alo_Moves} & 5/5\\
        & Downward dog with one leg raised in the air & \cite{Alo_Moves} & 5/5 \\
        & Lying on back with one leg raised in the air & \cite{Move_With_Nicole} & 5/5 \\
        & Lying on back with both legs raised in the air & \cite{LivestrongWoman} & 5/5 \\
        & High plank yoga pose & \cite{Well_Good} & 5/5 \\
        & Sitting down with legs laid in the front & \cite{Calisthenicmovement} & 5/5 \\
        & Warrior III pose & \cite{Jess_Yoga} & 3/5 \\
        & Front splits & \cite{Daniela_Suarez} & 4/5 \\
    \end{NiceTabular}
    }
    \caption{Comparison of Win rates vs SMODICE for Stickman }
    \label{tab:win rates_cross_embodiment_stickman_app}
\end{table}


\begin{table}[h]
    \centering
    \resizebox{\textwidth}{!}{%
    \begin{NiceTabular}{c|c|c|c}
        \hline
        & \textbf{Prompt Descriptions} & \textbf{Video Link}/\hyperlink{https://www.meta.ai/}{\textbf{Meta AI Prompt}} & \textbf{Win rate vs SMODICE} \\
        \hline
        \multirow{7}{*}{\rotatebox{90}{\textbf{SMPL (3D Humanoid)}}} & & &\\
        & A karate kick position & a karate kick & 5/5 \\
        & A cat doing a handstand & a side profile of cat doing headstand & 5/5\\
        & An arabesque ballet position & ballet movement & 5/5\\
        & Animated wikiHow demo of a cartwheel & \cite{crunches} & 5/5 \\
        & Running & running & 5/5 \\
        & Lying crunches & \cite{cartwheel} & 5/5 \\
        & Plank position & \cite{Well_Good} & 5/5 \\
    \end{NiceTabular}
    }
    \caption{Comparison of Win rates vs SMODICE for 3D SMPL Humanoid }
    \label{tab:win rates_cross_embodiment_smpl_app}
\end{table}

\subsection{Imagination-Free RLZero Complete Results}
We consider an ablation of our method by understanding the need for imagination by replacing the step with an end-to-end learning alternative. This is a novel baseline described in Appendix~\ref{ap:imagination_free}. Table~\ref{ap:imagination_free_results} shows the results of this end-to-end alternative which maps the shared latent space of video language models to behavior policy.

\begin{table}[t]
\centering
\begin{tabularx}{\textwidth}{L|C|C}
\toprule
Environment/Task&  RLZero & RLZero (Imagination-Free)  \\ 
\midrule
\textbf{Walker} &  &\\
Lying Down  &  \highlight{5/5}& \highlight{5/5}\\ 
Walk like a human & \highlight{5/5} & 4/5\\ 
Run like a human  & \highlight{5/5} & 1/5\\ 
Do lunges  & \highlight{5/5}& \highlight{5/5}\\ 
Cartwheel &  {4/5} & \highlight{5/5}\\ 
Strut like a horse & \highlight{5/5}  & 3/5\\ 
Crawl like a worm &  3/5 & 0/5\\ \hline 
\bottomrule 
\end{tabularx}
\caption{Win rates computed by GPT-4o of policies trained by different methods when compared to base policies trained by TD3+Image-language reward. \rlzero shows marked improvement over using embedding cosine similarity as reward functions. }
\vspace{-5mm}
\label{ap:imagination_free_results}
\end{table}

\subsection{Ablation on BFM}
\label{ap:discriminator}

We perform an ablation on the choice of BFM. For RLZero, we used Forward-Backward Representations \citep{fballreward} to train the BFM but we experiment with an alternate method: Proto Successor Measures \citep{agarwal2024protosuccessormeasurerepresenting} for training BFMs.
Table~\ref{tab:rl_zero_psm} shows that Forward Backward Representations performed better than Proto Successor Measures in our experiments.

\begin{table}[t]
\begin{tabularx}{\textwidth}{L|C|C}
\toprule
& RLZero (PSM) & RLZero (FB) \\ 
\midrule
\textbf{Walker} &  & \\
Lying Down  & \highlight{5/5} & \highlight{5/5} \\ 
Walk like a human  & \highlight{5/5}& \highlight{5/5} \\ 
Run like a human  & 2/5 & \highlight{5/5} \\ 
Do lunges  &  4/5  & \highlight{5/5} \\ 
Cartwheel &\highlight{4/5} & \highlight{4/5} \\ 
Strut like a horse & 4/5& \highlight{5/5} 
\\ 
Crawl like a worm & 1/5 &  \highlight{3/5} \\ \hline 
\textbf{Quadruped}&  &  \\
Cartwheel & 3/5 & \highlight{4/5} \\ 
Dance &  \highlight{5/5} &  \highlight{5/5} \\ 
Walk using three legs & \highlight{5/5} & \highlight{5/5} \\ 
Balancing on two legs & 4/5&  \highlight{5/5} \\ 
Lie still & 1/5 &  \highlight{2/5} \\ 
Handstand & \highlight{4/5} &  3/5 \\ 
\hline
\textbf{Cheetah} &  &  \\
Lie down & 0/5 & \highlight{2/5} \\
Bunny hop & \highlight{5/5}& \highlight{5/5}  \\ 
Jump high &\highlight{5/5} & \highlight{5/5} \\ 
Jump on back legs and backflip & \highlight{5/5} & \highlight{5/5} \\ 
Quadruped walk & {3/5}  &\highlight{4/5}  \\ 
Stand in place like a dog &\highlight{3/5} &\highlight{3/5}  \\ 
\midrule
Average &71.6\%&\highlight{84.2$\%$}\\
\bottomrule 
\end{tabularx}
\caption{Win rates computed by GPT-4o of policies trained by different methods when compared to base policies trained by TD3+Image-language reward.}
\vspace{-5mm}
\label{tab:rl_zero_psm}
\end{table}

\subsection{More Failure Cases}

We include more failure cases in Figure~\ref{fig:failed_imagination_appendix} and Figure~\ref{fig:failed_statematch_appendix} as they can help in understanding the limitations of \rlzero better and may inform future work.

\begin{figure}
    \centering
    \includegraphics[width=0.95\linewidth]{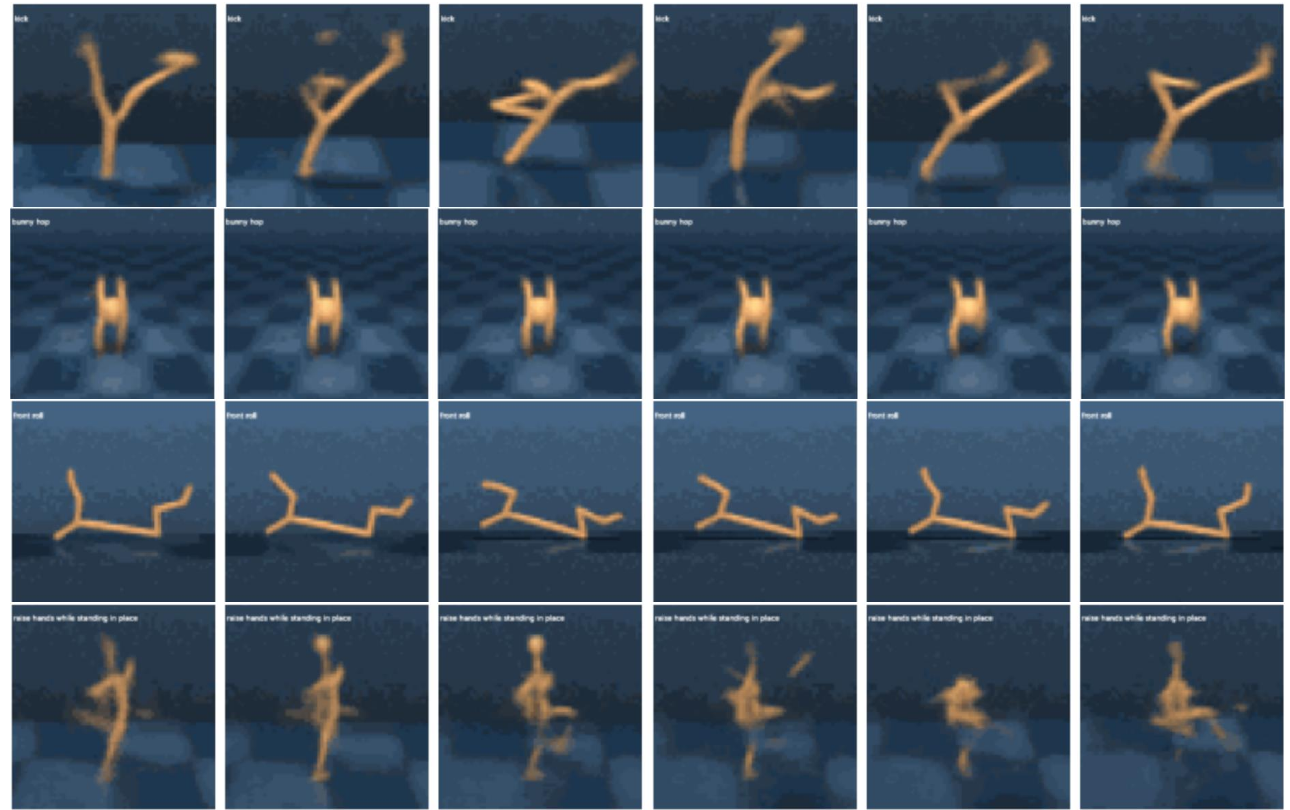}
    \caption{More examples of failed imagination by the video generation model used in \rlzero. From top to bottom: Walker - `kick', Quadruped - `bunny hop', Cheetah - `frontroll', Stickman - `raise hands while standing in place'}
    \label{fig:failed_imagination_appendix}
\end{figure}

\begin{figure}
    \centering
    \includegraphics[width=0.95\linewidth]{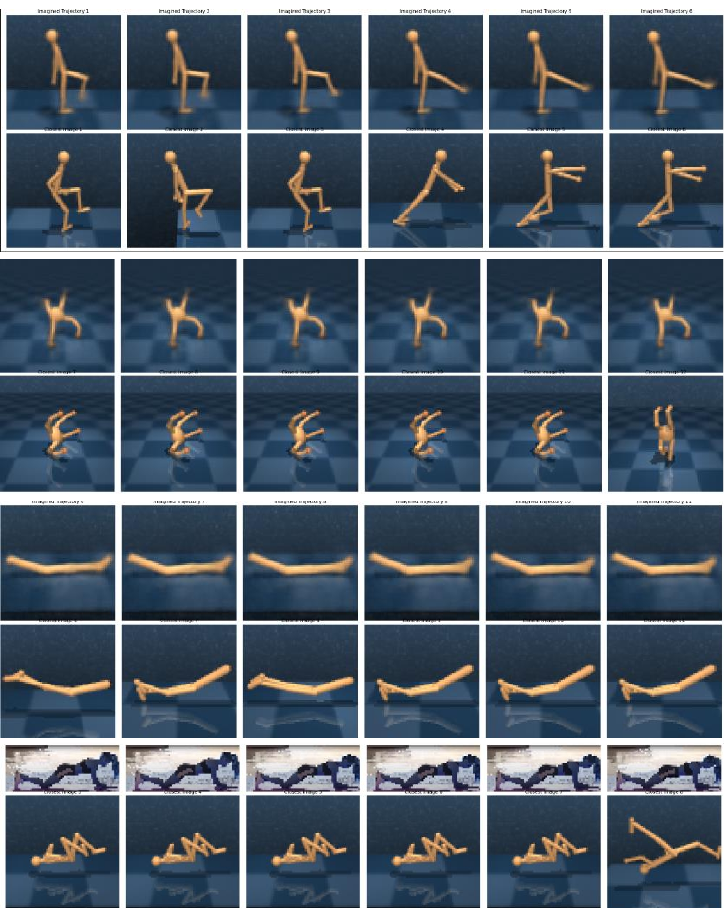}
    \caption{More examples of failed grounding by the image retrieval model used in \rlzero. The top image shows the imagined frame or frame from the embodied video, and the bottom is the nearest frame obtained from the agent's prior interaction dataset.}
    \label{fig:failed_statematch_appendix}
\end{figure}
\subsection{RLZero evaluation with Video-Embedding similarity}

In this section, we experiment with another metric for comparison -- embedding similarity between a video of the generated behavior and the text. We use InternVideo2 to embed the videos and take the cosine similarity with the prompt used to generate the behavior. Table~\ref{tab:rl_zero_appendix}  shows the results for this metric of comparison. Unfortunately, we observed that the similarity score is frequently higher even for behaviors that differ significantly from the prompt. This points to a limitation of using this metric for evaluation. Some reasons for this failure could be the limited context length of 8 for the video embedding model or a misalignment between video and text embedding vectors~\citep{liang2022mind}. 
\begin{table*}[t]
\centering
\resizebox{0.75\textwidth}{!}{
\small
\begin{NiceTabular}{c|cc|cc|c}
\toprule
& \multicolumn{2}{c|}{Image-language reward} & \multicolumn{2}{c|}{Video-language reward} & RLZero  \\ 
\midrule
& IQL          & TD3  (Base Model)    & TD3    & IQL      &    \\ \hline
\textbf{Walker} &  &   &    &  & \\
Lying Down  &  2/5 (0.95$\pm$0.00) &- (0.89$\pm$0.02)& 2/5 (0.93$\pm$0.01) &  \highlight{5/5} (0.94$\pm$0.01)& \highlight{5/5} (0.93$\pm$0.00)\\ 
Walk like a human  & 1/5 (0.93$\pm$0.00)  &- (0.83$\pm$0.02)&  3/5 (0.92$\pm$0.01)  & 4/5 (0.94$\pm$0.00)& \highlight{5/5} (0.98$\pm$0.00)\\ 
Run like a human  & \highlight{5/5} (0.95$\pm$0.02)  &- (0.88$\pm$0.03)& 1/5 (0.91$\pm$0.01) & 2/5 (0.94$\pm$0.00)   & \highlight{5/5} (0.96$\pm$0.00)\\ 
Do lunges  & 4/5 (0.94$\pm$0.01) &- (0.91$\pm$0.02)& 2/5 (0.92$\pm$0.00) & 3/5 (0.93$\pm$0.00)  & \highlight{5/5} (0.94$\pm$0.01)\\ 
Cartwheel & \highlight{4/5} (0.95$\pm$0.01)  & - (0.93$\pm$0.01)&  3/5 (0.94$\pm$0.01)& \highlight{4/5} (0.96$\pm$0.01)   &  \highlight{4/5} (0.95$\pm$0.01)\\ 
Strut like a horse & \highlight{5/5} (0.96$\pm$0.00)  & - (0.94$\pm$0.02) &1/5 (0.94$\pm$0.00)  &  3/5 (0.96$\pm$0.03) & \highlight{5/5} 
 (0.96$\pm$0.00)\\ 
Crawl like a worm & \highlight{4/5} (0.93$\pm$0.00)  & - (0.92$\pm$0.01)& 1/5 (0.92$\pm$0.01) &  2/5 (0.95$\pm$0.01) &  3/5 (0.89$\pm$0.01)\\ \hline 
\textbf{Quadruped}&  &  & &    & \\
Cartwheel & 1/5 (0.95$\pm$0.00)  & - (0.95$\pm$0.00)& 3/5 (0.95$\pm$0.01) &  1/5 (0.95$\pm$0.01) & \highlight{4/5} (0.92$\pm$0.02)\\ 
Dance &  \highlight{5/5} (0.94$\pm$0.00) & - (0.94$\pm$0.00)&  3/5 (0.94$\pm$0.02)&  1/5 (0.94$\pm$0.01)  &  \highlight{5/5} (0.93$\pm$0.01)\\ 
Walk using three legs & 2/5 (0.92$\pm$0.00)  & - (0.91$\pm$0.00)& 2/5 (0.91$\pm$0.01)& 3/5 (0.93$\pm$0.01) & \highlight{5/5} (0.93$\pm$0.01) \\ 
Balancing on two legs & 2/5 (0.93$\pm$0.01)  & - (0.93$\pm$0.00)& 2/5 (0.93$\pm$0.01)& 2/5 (0.93$\pm$0.00) &  \highlight{5/5} (0.94$\pm$0.02)\\ 
Lie still & 1/5 (0.87$\pm$0.00)  & - (0.90$\pm$0.01)& \highlight{3/5} (0.94$\pm$0.00) & 2/5 (0.95$\pm$0.00) &  2/5 (0.92$\pm$0.00)\\ 
Handstand & 2/5 (0.91$\pm$0.01)  & - (0.91$\pm$0.02)& \highlight{4/5} (0.92$\pm$0.01)& 2/5 (0.94$\pm$0.00)  &  3/5 (0.91$\pm$0.00)\\ 
\hline
\textbf{Cheetah} &  &  & &      & \\
Lie down & \highlight{3/5} (0.92$\pm$0.02)   & - (0.87$\pm$0.00)& 2/5 (0.94$\pm$0.00)  & \highlight{3/5} (0.94$\pm$0.01)  & 2/5 (0.90$\pm$0.01) \\
Bunny hop & 3/5 (0.98$\pm$0.00)   & - (0.98$\pm$0.00)& 1/5 (0.98$\pm$0.00) &  3/5 (0.97$\pm$0.02) & \highlight{5/5} (0.96$\pm$0.00) \\ 
Jump high & 3/5 (0.94$\pm$0.01)  & - (0.94$\pm$0.01)& 0/5 (0.94$\pm$0.01) & \highlight{5/5} (0.93$\pm$0.01)  & \highlight{5/5} (0.93$\pm$0.01)\\ 
Jump on back legs and backflip & 3/5 (0.93$\pm$0.01)   &- (0.92$\pm$0.00) & 0/5 (0.91$\pm$0.01) & 2/5 (0.92$\pm$0.01)   & \highlight{5/5} (0.91$\pm$0.01)\\ 
Quadruped walk & 3/5 (0.96$\pm$0.02)  &- (0.85$\pm$0.01) & 3/5 (0.98$\pm$0.00) & 3/5 (0.99$\pm$0.01)  &\highlight{4/5} (0.97$\pm$0.01) \\ 
Stand in place like a dog & \highlight{4/5} (0.93$\pm$0.01)   &- (0.88$\pm$0.00) &  3/5 (0.98$\pm$0.01) &  0/5 (0.98$\pm$0.00)&3/5 (0.97$\pm$0.00) \\ 
\hline 
\textbf{Stickman}&  &  & &    & \\
Lie down stable & 2/5 (0.92$\pm$0.00)  & - (0.91$\pm$0.01)& \highlight{4/5} (0.93$\pm$0.00) & 1/5 (0.93$\pm$0.00) &   \highlight{4/5} (0.91$\pm$0.00)\\ 
Lunges &0/5 (0.92$\pm$0.00)   & - (0.93$\pm$0.02)& 2/5(0.92$\pm$0.01) & 0/5 (0.92$\pm$0.00) &   \highlight {5/5}(0.96$\pm$0.00)\\ 
Praying & 1/5 (0.85$\pm$0.00) &  - (0.89$\pm$0.02)& 0/5 (0.87$\pm$0.01) &  0/5 (0.87$\pm$0.01)  & \highlight{4/5} (0.91$\pm$0.00)\\ 
Headstand & 2/5 (0.90$\pm$0.01) & - (0.90$\pm$0.00) &  2/5 (0.90$\pm$0.01) &1/5 (0.87$\pm$0.01) &  \highlight{4/5} (0.90$\pm$0.00)\\ 
Punch & 2/5 (0.89$\pm$0.02)  & - (0.88$\pm$0.02)&3/5 (0.88$\pm$0.02)  & \highlight{4/5} (0.91$\pm$0.00)  &  \highlight{4/5} (0.90$\pm$0.02)\\ 
Plank &0/5 (0.90$\pm$0.01)   & - (0.93$\pm$0.03) & 0/5 (0.89$\pm$0.01)& 0/5 (0.93$\pm$0.00)&  \highlight{3/5} (0.96$\pm$0.00)\\ 
\midrule
Average &51.2\% (0.926)&\text{Base Model} (0.908)&40\% (0.927)&44.8\% (0.936)&\highlight{83.2}$\%$ (0.933)\\
\bottomrule 
\end{NiceTabular}
}
\caption{Win rates computed by GPT-4o of policies trained by different methods when compared to a base policies trained by TD3+Image-language reward. \rlzero shows marked improvement over using embedding cosine similarity as reward functions. }
\vspace{-5mm}
\label{tab:rl_zero_appendix}
\end{table*}
\end{document}